\newcounter{bar}
\newcommand{\conmcmc}{{\sc Con-MCMC}}
\newcommand{\G}{{\mathcal{G}}}
\newcommand{\X}{{\mathcal{X}}}
\newtheorem{defn}{Definition}[section]
\newtheorem{theorem}{Theorem}
\newtheorem{lemma}[theorem]{Lemma}
\newtheorem{thm}{Theorem}[section]
\title{Contextual Symmetries in Probabilistic Graphical Models\thanks{This StarAI-16 paper is a close revision of [Anand {\em et al.}, 2016].}}
\author{Ankit Anand \\Indian Institute of Technology, Delhi \\ankit.anand@cse.iitd.ac.in \And Aditya Grover \\
Stanford University \\ adityag@cs.stanford.edu  \And Mausam \and Parag Singla \\ 
Indian Institute of Technology, Delhi \\
\{mausam,parags\}@cse.iitd.ac.in}
\begin{document}

\maketitle
\nocite{anand&al16}
\begin{abstract}
 An important approach for efficient inference in probabilistic graphical models exploits symmetries among objects in the domain. Symmetric variables (states) are collapsed into meta-variables (meta-states) and inference algorithms are run over the lifted graphical model instead of the flat one. Our paper extends existing definitions of symmetry by introducing the novel notion of {\em contextual symmetry}. Two states that are not globally symmetric, can be contextually symmetric under some specific assignment to a subset of variables, referred to as the {\em context variables}. Contextual symmetry subsumes previous symmetry definitions and can represent a large class of symmetries not representable earlier. We show how to compute contextual symmetries by reducing it to the problem of graph isomorphism. We extend previous work on exploiting symmetries in the MCMC framework to the case of contextual symmetries. Our experiments on several domains of interest demonstrate that exploiting contextual symmetries can result in significant computational gains.
\end{abstract}

\section{Introduction}
\label{sec:intro}

An important approach for efficient inference in probabilistic graphical models exploits symmetries in the underlying domain. It is especially useful for statistical relational learning models such as Markov logic networks \cite{richardson&domingos06}, which exhibit repeated sub-structures -- many objects are indistinguishable from each other and their associated relations have identical probability distributions. {\em Lifted inference} algorithms (see \cite{kimmig&al15} for a survey) exploit this phenomenon by grouping symmetric states (variables) into meta-states (meta-variables) and performing inference in this reduced (lifted) graphical model.

Early approaches to lifted inference devised first order extensions of propositional inference algorithms. These include approaches for lifting exact inference algorithms such as variable elimination \cite{poole03,braz&al05}, weighted model counting \cite{gogate&domingos11}, knowledge compilation~\cite{broeck&al11}, as well as lifting approximate algorithms such as belief propagation \cite{singla&domingos08,kersting&al09,singla&al14}, Gibbs sampling~\cite{venugopal&gogate12} and importance sampling \cite{gogate&al12}. In all these approaches, the lifting technique is tied to the specific algorithm being considered. More recently, another line of work \cite{jha&al10,bui&al13,niepert&broeck14,sarkhel&al14,kopp&al15} has started looking at the notion of symmetry independent of the inference technique. In several cases, these symmetries are compactly represented using permutation groups. The computed symmetries have been used downstream for lifting existing algorithms such as variational inference \cite{bui&al13},  (integer) linear programming \cite{noessner&al13,mladenov&al14}, and Markov chain Monte Carlo (MCMC) \cite{niepert12,broeck&niepert15}, which is our focus.

A key shortcoming of existing algorithms is that they only identify and exploit sets of variables (states) that are symmetric {\em unconditionally}. Our goal is to extend the notion of symmetries to {\em contextual} symmetries, sets of states that are symmetric under a given context (variable-value assignment). Our proposal is inspired by the extension of conditional independence to context-sensitive independence \cite{boutilier&al96}, and analogously extends unconditional symmetries to contextual. As our first contribution, we develop a formal framework to define contextual symmetries. We also present an algorithm to compute contextual symmetries by reducing the problem to graph isomorphism.

Figure \ref{fig:Example-Contextual} illustrates an example of contextual symmetries. A couple A and B may like to go to a romantic movie. They are somewhat less (equally) likely to go alone compared to when they go together. 
However if the movie is a thriller, A may be less interested in going by herself, but B may not change his behavior. Hence, A and B are symmetric to each other if the movie is romantic, but not symmetric if the movie is a thriller. We will call the A and B contextually symmetric conditioned on the movie being romantic. 

\begin{figure}\label{fig:Example}
\subfigure[]{\includegraphics[width=0.16\textwidth]{{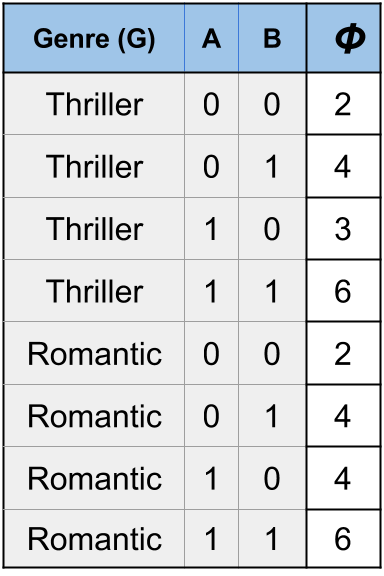}}\label{fig:Example-Contextual}}
\includegraphics[width=0.15\textwidth]{{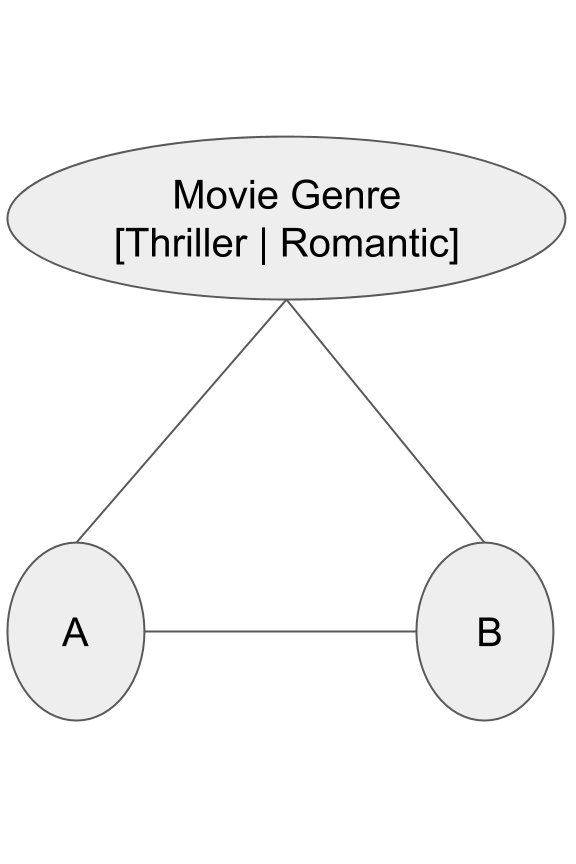}}
\subfigure[]{\includegraphics[width=0.16\textwidth]{{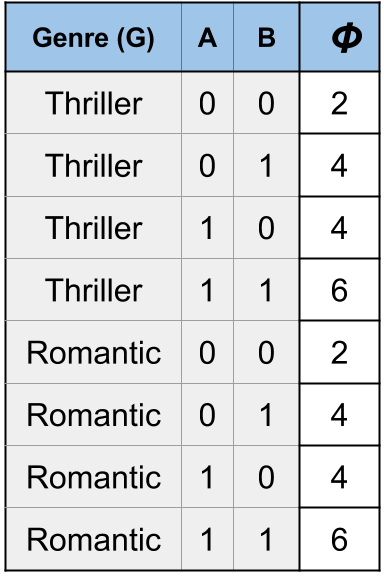}}\label{fig:Example-Orbital}}
\caption{Illustration of (a) Contextual Symmetry (with Genre=``Romantic'')  (b) Orbital Symmetry in the Movie Network.}
\end{figure}

Finally, our paper extends the line of work on Orbital MCMC \cite{niepert12} -- a state-of-the-art approach to exploit unconditional symmetries in a generic MCMC framework. 
Orbital MCMC achieved reduced mixing times compared to Gibbs sampling in domains where such symmetries exist. We design \conmcmc, an algorithm that uses contextual symmetries within the MCMC framework. Our experiments demonstrate that on various interesting domains (relational and propositional), where contextual symmetries may be present, \conmcmc\ can yield substantial gains compared to Orbital MCMC and Gibbs Sampling. We also release a reference implementation of \conmcmc\ sampler for wider use.\footnote{\url{https://github.com/dair-iitd/con-mcmc}}

\section{Background}

Let $\mathcal{X}= \{X_1, \ldots, X_n\}$ be a finite set of discrete
random variables. For ease of exposition, we consider Boolean random variables although our analysis extends more generally to n-ary random variables. We denote $s\in\{0,1\}^n$ to represent a {\em state}. A {\em graphical model} $\mathcal{G}$ over $\mathcal{X}$ can be
represented as the set of pairs $\{f_k,w_k\}_{k=1}^{m}$ where 
$f_k$ is a formula (feature) over a subset of variables in $\mathcal{X}$ 
and $w_k$ is the associated weight \cite{koller&friedman09}. This is 
the representation used in several existing models such as Markov 
logic networks \cite{domingos&lowd09}.

\subsection{Symmetries in Graphical Models} 
Some states may be symmetric; thus, they will have the same joint probability. This fact can be exploited in inference algorithms. To define symmetries, we make use of the formalism of {\em automorphism groups}, which are generic representations for symmetries between any set of objects. Automorphism groups over graphical models are defined using another algebraic structure called a {\em permutation group}.

A permutation $\theta$ is a bijection from the
set $\mathcal{X}$ onto itself. We use $\theta(X)$ to 
denote the application of $\theta$ to the element $X \in \mathcal{X}$.
We also overload $\theta$ by denoting $\theta(s)$ as the permutation of a state $s$ in which 
each component random variable $X$ is permuted using $\theta(X)$. 

A permutation group $\Theta$ is a set of permutations 
which contains the identity element, has a unique inverse for every element 
in the set, and is closed under composition operator. 

Following previous work \cite{niepert12} we define symmetries and automorphism groups in graphical models as follows:
 
\begin{defn}
A {\bf symmetry of a graphical model} $\mathcal{G}$ over the set $\cal{X}$ is
represented as the permutation $\theta$ of the variables in $\mathcal{X}$  
that maps $\mathcal{G}$ back on to itself, i.e. results in the 
same set of weighted formulas.
\end{defn}

\begin{defn} 
An {\bf automorphism group of a graphical model} $\mathcal{G}$ is defined as the permutation group ($\Theta$) over $\mathcal{G}$ such that each $\theta \in \Theta$ is a symmetry of $\mathcal{G}$.
\end{defn}
This definition of an automorphism group of a graphical model is analogous to that of an automorphism group of an edge-colored graph, where variables in $\mathcal{G}$  act as vertices in graph, features act as edges (or hyperedges) and weights act as colors on edges. We next define  an orbit of a state. 
\begin{defn}
The {\bf orbit ($\Gamma$) of a state} $s$ under the automorphism group $\Theta$ is defined as all the states that can be reached by applying a symmetry $\theta \in \Theta$ on the variables of $s$, i.e., $\Gamma_{\Theta}(s)=\{s' \in \{0,1\}^n \mid \exists
\theta \in \Theta$ s.t. $\theta(s) = s'\}$.
\end{defn}
Henceforth, we will refer to these unconditional symmetries of a graphical model as {\em orbital symmetries}. Let $P(s)$ be the probability distribution defined by model $\mathcal{G}$ over the states.
\begin{thm}
If $\Theta$ is an automorphism group of $\mathcal{G}$, then $\forall s'\in \Gamma_\Theta(s):~ P(s) = P(s')$, i.e. orbital symmetries of a graphical
model are probability preserving transformations.
\end{thm}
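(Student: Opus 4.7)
The plan is to unwind the definitions and show that the unnormalized probability of $s$ equals that of $s'$, after which equality of the partition function gives the result. I begin by writing the distribution induced by the model in the log-linear form $P(s)=\frac{1}{Z}\exp\!\bigl(\sum_{k=1}^m w_k f_k(s)\bigr)$, which is the standard semantics for the weighted-formula representation $\{(f_k,w_k)\}_{k=1}^m$ recalled earlier in the background. Since $s'\in\Gamma_\Theta(s)$, fix some $\theta\in\Theta$ with $\theta(s)=s'$; it then suffices to prove $P(s)=P(\theta(s))$ for every $\theta$ in the automorphism group.

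Next I would establish the single crucial algebraic fact: for each formula $f_k$, evaluating $f_k$ on the permuted state $\theta(s)$ is the same as evaluating the permuted formula $f_k\circ\theta$ on $s$. This is a direct consequence of how $\theta$ acts on states variable-wise, and needs only a one-line verification by tracing how the variables appearing in $f_k$ are renamed. Having this in place, I would write
\begin{equation*}
\sum_{k=1}^m w_k\, f_k(\theta(s)) \;=\; \sum_{k=1}^m w_k\, (f_k\circ\theta)(s).
\end{equation*}

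The heart of the argument is then to invoke the hypothesis that $\theta$ is a symmetry of $\mathcal{G}$, meaning the multiset of weighted formulas $\{(f_k\circ\theta,w_k)\}_k$ coincides with $\{(f_k,w_k)\}_k$. Concretely, there is a bijection $\pi$ on $\{1,\dots,m\}$ such that $f_k\circ\theta = f_{\pi(k)}$ and $w_k = w_{\pi(k)}$ for every $k$. Re-indexing the sum above through $\pi$ yields $\sum_k w_k f_k(\theta(s)) = \sum_k w_k f_k(s)$, so the unnormalized weights agree. Because the partition function $Z$ depends only on the model and not on the state, dividing by the common $Z$ gives $P(s)=P(\theta(s))=P(s')$.

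The only real subtlety, and the main thing to be careful with, is the bookkeeping in that re-indexing step: one has to be precise that the symmetry condition equates the multisets (not just the sets) of weighted formulas, so that duplicate formulas with possibly equal weights are matched consistently. Everything else is a routine substitution, and no properties of $\Theta$ beyond the fact that each of its elements individually preserves the weighted-formula representation are needed — closure under composition and inverses of $\Theta$ are not used in this theorem, only the defining property of a single symmetry.
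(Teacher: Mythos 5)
Your proof is correct. The paper itself gives no proof of this theorem --- it is stated as background, imported from Niepert [2012] --- so there is nothing to compare against; your argument (unnormalized weight is invariant because the symmetry induces a bijection $\pi$ on the indices of the weighted formulas, and $Z$ is state-independent) is the standard and expected one. The only point worth flagging is the convention in your ``crucial algebraic fact'': depending on whether $\theta(s)$ is defined by $\zeta_{\theta(X)}(\theta(s))=\zeta_X(s)$ or by the pullback $\zeta_X(\theta(s))=\zeta_{\theta(X)}(s)$, the identity reads $f_k(\theta(s))=(f_k\circ\theta^{-1})(s)$ rather than $(f_k\circ\theta)(s)$; this is harmless because the substitution by $\theta$ permutes the finite multiset of weighted formulas if and only if the substitution by $\theta^{-1}$ does, so your re-indexing step and your remark that closure of $\Theta$ under inverses is not needed both survive intact.
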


Therefore, the automorphism group $\Theta$ for a graphical model is also referred to as an automorphism group for the underlying probability distribution. The symmetries of a graphical models as defined above can be obtained via solving a graph automorphism problem. Though the problem is not known to be in P or NP-complete,\footnote{A quasipolynomial time algorithm \cite{babai-arxiv2015} has been proposed recently for the related graph isomorphism problem which remains to be verified.} efficient solutions 
can be obtained using the software such as Saucy and Nauty
~\cite{saucy,nauty}.

\subsection{Markov chain Monte Carlo}\label{sec:background-mcmc}
Markov chain Monte Carlo (MCMC) is a popular approach for approximate inference in graphical models. A Markov chain is set up over the state space such that its stationary distribution is the same as the underlying probability distribution. An {\em Orbital Markov chain} \cite{niepert12} exploits the orbital symmetries of a model by setting up a Markov chain combining the original MCMC moves with orbital moves. Let $M_o$ denote an orbital Markov chain and $M$ be the corresponding original Markov chain. Then, given the current state $s^{(t)}$, the next state $s^{(t+1)}$ in $M_o$ is sampled as follows:
\begin{itemize}
\item {\it original move}: sample an intermediate state $s'^{(t+1)}$ from $s^{(t)}$ based on the transition probability in $M$.
\item {\it orbital move}: sample a next state $s^{(t+1)}$ uniformly at random from the orbit $\Gamma_\Theta(s'^{(t+1)})$.
\end{itemize}
Orbital MCMC converges to the same stationary distribution as the original MCMC and is shown to have significantly faster convergence properties.

\section{Contextual Symmetries}
\label{sec:framework}
Our work proposes the novel notion of contextual symmetries -- symmetries that only hold under a given context. We now extend the definitions of the previous section to their contextual counterparts. First we define a context.

\vspace{0.5ex}
\begin{defn} 
A {\bf context} $C$ is a partial assignment, i.e., a set of pairs $(X_i,x_i)$, where $X_i \in \mathcal{X}$ and $x_i \in \{0,1\}$, and no $X_i$ is repeated in the set. 

\end{defn}

For example, in Figure~\ref{fig:Example}, 
we can define a context (Genre, ``Romantic"). We refer to a context as a {\em 
single variable context} if there is only one element in the context 
set. 

We say that a variable $X_i$ appears in a context if there
is a pair $(X_i,x_i) \in C$. Given a context $C$, we will use 
$\mathcal{X}_C$ to denote the subset of variables of $\mathcal{X}$
which appear in $C$. We will use $\bar{\mathcal{X}}_{{C}}$ to denote
the complement of this set. Given a state $s$, we will use $\zeta_{X_i}(s)$ to denote 
the value of $X_i$ in state $s$. We say that a state $s$ is {\em consistent}
with the context $C$ iff $\forall (X_i,x_i) \in C$ we have $x_i=\zeta_{X_i}(s)$. 

In order to define contextual automorphism, we will need to define the notion of a reduced model.
\vspace{0.5ex}
\begin{defn}  
Given a graphical model $\mathcal{G}= \{f_k,w_k\}_{k=1}^{m}$, and a context $C$, the {\bf reduced model} $\mathcal{G}^{r}_{C}$ is defined as the new graphical model obtained by substituting $X_i=x_i$ in each formula $f_k$ for all $(X_i,x_i)\in C$ and keeping the original weights $w_k$.
\end{defn}

Note that $\G^{r}_C$ is defined over the set 
$\bar{\mathcal{X}}_{{C}}$. As an example, if the model is 
represented by the formulas \{($P \vee Q$, $w_1$), ($R \vee Q 
\vee S$, $w_2$)\}, the reduced model under the single variable context $\{(R,0)\}$ 
will be \{($P \vee Q$, $w_1$) and ($Q \vee S$, $w_2$)\}. In the 
factored form representation, reduction by a context corresponds to 
fixing the values of the context variables in the potential table. E.g., in 
Figure~\ref{fig:Example} given the context "Romantic", we 
reduce the factor to the bottom four rows of the potential
table where Genre has value "Romantic".
We are now ready to define a contextual symmetry of
a graphical model.

\vspace{0.5ex}
\begin{defn}\label{def:consym}
A {\bf contextual symmetry of a graphical model} $\G$ under context $C$ is represented as a permutation $\theta$ of variables in $\mathcal{X}$ s.t. a) $\theta(X_i) = X_i$,
$\forall X_i \in \mathcal{X}_C$ i.e. variables in the context
are mapped to themselves, and b) $\exists$ an orbital symmetry 
$\theta^r$ of the reduced model $\G^{r}_{C}$ 
such that $\theta(X_i)=\theta^r(X_i)$ $\forall
X_i \in \bar{\mathcal{X}}_{{C}}$, i.e. mapping of the remaining
variables defines an orbital symmetry of the reduced graphical model under context $C$.
\end{defn}

For example, in Figure \ref{fig:Example}, let a permutation $\theta^*$ be: $\theta^*(G)=G, \theta^*(A)=B, \theta^*(B)=A$. $\theta^*$ is a contextual symmetry under the context (Genre, ``Romantic''), but not under the context (Genre, ``Thriller'').

\vspace{0.5ex}
\begin{defn}
A {\bf contextual automorphism group of a graphical model} $\G$ under context $C$ is defined as a permutation group ($\Theta_C$) over $\G$, such that each $\theta\in\Theta_C$ is a contextual symmetry of $\G$ under context $C$.
\end{defn}

\vspace{0.5ex}
\begin{defn}
The {\bf contextual orbit of a state} $s$ under the contextual automorphism group $\Theta_C$ (given the context $C$) is the set of those states which are consistent with $C$ and can be reached by applying $\theta \in \Theta_C$ to $s$, i.e., $\Gamma_{\Theta_C}(s)=\{s' \in \{0,1\}^n \mid \exists \theta \in \Theta_C$ s.t. $\theta(s) = s' \bigwedge \forall (X_i,x_i) \in C, \zeta_{X_i}(s')=x_i\}$. 
\end{defn}
Note that $s$ must be consistent with $C$ for it to have a non-empty contextual orbit.
Analogous to orbital symmetries, contextual symmetries are also probability preserving.

\begin{thm}\label{thm:probequi}
A contextual symmetry $\theta$ of $\G$ under context $C=\{(X_i,x_i)\}$ is probability preserving $\left(P(s)=P(\theta(s))\right)$, as long as 
$s$ is consistent with $C$.
\end{thm}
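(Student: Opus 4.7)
The plan is to reduce the claim to the already-stated orbital case (Theorem~1) applied to the reduced model $\mathcal{G}^r_C$. The crux is that when $s$ is consistent with $C$, evaluating the weighted formulas of $\mathcal{G}$ at $s$ gives exactly the same contributions as evaluating the reduced formulas of $\mathcal{G}^r_C$ at the restriction of $s$ to $\bar{\mathcal{X}}_C$; this lets us transport the orbital equivalence from $\mathcal{G}^r_C$ back up to $\mathcal{G}$.

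First I would show that $\theta(s)$ is also consistent with $C$: by part (a) of Definition~\ref{def:consym}, $\theta$ fixes every variable in $\mathcal{X}_C$, so $\zeta_{X_i}(\theta(s)) = \zeta_{X_i}(s) = x_i$ for each $(X_i,x_i)\in C$. Next, let $s_r$ denote the restriction of $s$ to $\bar{\mathcal{X}}_C$ and let $f_k^r$ be the formulas of $\mathcal{G}^r_C$ obtained by substituting each $X_i=x_i$. Because $s$ is consistent with $C$, $f_k(s) = f_k^r(s_r)$ for every $k$, and analogously $f_k(\theta(s)) = f_k^r((\theta(s))_r)$. By part (b) of Definition~\ref{def:consym}, $\theta$ agrees on $\bar{\mathcal{X}}_C$ with an orbital symmetry $\theta^r$ of $\mathcal{G}^r_C$, so $(\theta(s))_r = \theta^r(s_r)$.

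Writing the joint distribution as $P(s) \propto \prod_k \exp(w_k f_k(s))$ with a normalization constant $Z$ that does not depend on $s$, the two preceding observations give
\begin{align*}
P(s) &= \tfrac{1}{Z}\prod_k \exp\!\bigl(w_k f_k^r(s_r)\bigr),\\
P(\theta(s)) &= \tfrac{1}{Z}\prod_k \exp\!\bigl(w_k f_k^r(\theta^r(s_r))\bigr).
\end{align*}
Since $\theta^r$ is an orbital symmetry of $\mathcal{G}^r_C$, Theorem~1 applied to $\mathcal{G}^r_C$ yields $P^r(s_r) = P^r(\theta^r(s_r))$, and since the normalizer of $\mathcal{G}^r_C$ also cancels between the two sides, the two products above are equal, giving $P(s) = P(\theta(s))$.

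The only delicate point I expect is making precise the identity $f_k(s) = f_k^r(s_r)$ for consistent $s$ (i.e.\ that substitution of context values commutes with evaluation), and similarly for $\theta(s)$; once this bookkeeping is in place, the result follows immediately from the orbital case. No normalization subtlety arises because $Z$ is a constant shared by $s$ and $\theta(s)$, so we never actually need to relate $Z$ to the partition function of $\mathcal{G}^r_C$.
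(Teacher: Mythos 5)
Your proof is correct and follows the route the paper's definitions are built for: establish that $\theta(s)$ remains consistent with $C$ (via condition (a)), identify the factor contributions of $\mathcal{G}$ at a consistent state with those of $\mathcal{G}^r_C$ at the restricted state, and then invoke the orbital-symmetry result (Theorem~1) on the reduced model via $\theta^r$, with both normalizers cancelling harmlessly. The paper does not print a proof for this theorem, but your argument is exactly the intended reduction and the bookkeeping steps you flag ($f_k(s)=f_k^r(s_r)$ and $(\theta(s))_r=\theta^r(s_r)$) are handled correctly.
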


\subsection{Relationship with Related Concepts}
\label{sec:rel}

The set of contextual symmetries subsumes that of orbital symmetries  -- any orbital symmetry is a contextual symmetry under a null context $\emptyset$. The two notions are even more related, as the following two lemmas show. Let $\mathcal{X}_\theta^I$ be the set of variables that map onto itself in a permutation $\theta$, i.e. $\forall X\in \mathcal{X}_\theta^I: \theta(X)=X$.

\begin{lemma}
An orbital symmetry $\theta$ is a contextual symmetry under a context $C$ if $\X_C\subseteq \X_\theta^I$.
\end{lemma}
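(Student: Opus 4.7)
The plan is to unpack the two clauses of Definition \ref{def:consym} and verify each directly from the hypothesis $\mathcal{X}_C \subseteq \mathcal{X}_\theta^I$ together with the assumption that $\theta$ is an orbital symmetry of $\mathcal{G}$. Clause (a) is immediate: every $X_i \in \mathcal{X}_C$ lies in $\mathcal{X}_\theta^I$ by hypothesis, so $\theta(X_i)=X_i$.

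For clause (b), I would construct $\theta^r$ as the restriction of $\theta$ to $\bar{\mathcal{X}}_C$. First I would note that this restriction is well-defined: since $\theta$ fixes every element of $\mathcal{X}_C$ pointwise and is a bijection of $\mathcal{X}$, it must also send $\bar{\mathcal{X}}_C$ bijectively onto itself. Hence $\theta^r := \theta|_{\bar{\mathcal{X}}_C}$ is a permutation of $\bar{\mathcal{X}}_C$, and by definition $\theta^r(X_i)=\theta(X_i)$ for every $X_i\in\bar{\mathcal{X}}_C$.

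It remains to show that $\theta^r$ is an orbital symmetry of $\mathcal{G}^r_C$, i.e., that applying $\theta^r$ variable-wise to each reduced formula in $\mathcal{G}^r_C$ yields back the same multiset of weighted formulas. The key observation is that substitution by $C$ commutes with the action of $\theta$: because $\theta$ fixes each $X_i\in \mathcal{X}_C$, substituting $X_i=x_i$ in $f_k$ and then applying $\theta$ gives the same result as applying $\theta$ first and then substituting. Formally, for every feature $(f_k,w_k) \in \mathcal{G}$ the reduced feature in $\mathcal{G}^r_C$ is $(f_k|_C, w_k)$, and $\theta^r(f_k|_C) = \theta(f_k)|_C$. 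Since $\theta$ is an orbital symmetry of $\mathcal{G}$, there is an index permutation $\pi$ with $\theta(f_k)=f_{\pi(k)}$ and $w_{\pi(k)}=w_k$; thus $\theta^r(f_k|_C) = f_{\pi(k)}|_C$ with matching weight, so $\theta^r$ permutes the weighted features of $\mathcal{G}^r_C$ among themselves.

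The only subtle point I anticipate is making the commutation of substitution and permutation rigorous at the level of formulas (rather than, say, factor tables): one must ensure that $\theta$'s action on a formula — which renames variables — leaves the already-substituted constants $x_i$ untouched. This is automatic because the constants in $f_k|_C$ are not variables at all, and $\theta^r$ is only defined on the remaining non-context variables. Once this is observed, the proof is essentially a two-line check of each clause of Definition \ref{def:consym}.
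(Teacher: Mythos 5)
Your proof is correct: the paper states this lemma without giving a proof, and your argument—taking $\theta^r$ to be the restriction of $\theta$ to $\bar{\mathcal{X}}_C$ and observing that reduction by $C$ commutes with $\theta$ precisely because $\theta$ fixes every context variable, so $\theta^r(f_k|_C)=\theta(f_k)|_C=f_{\pi(k)}|_C$ with matching weight—is exactly the argument the lemma implicitly relies on. Your closing remark about the substituted constants being inert under the renaming is the right subtlety to flag, and it is handled correctly.
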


\begin{lemma}
Let $V\subseteq \X$.  If a permutation $\theta$ is a contextual symmetry of $\G$ under all possible contexts $C_i$ where $\X_{C_i}=V$, then $\theta$ is an orbital symmetry of $\G$.
\end{lemma}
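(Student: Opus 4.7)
My plan is to derive the conclusion by applying Theorem~\ref{thm:probequi} to each state individually. The key observation is that every state $s\in\{0,1\}^n$ is consistent with exactly one context $C_s := \{(X_j,\zeta_{X_j}(s)) : X_j\in V\}$ whose context-variable set equals $V$. First I would use hypothesis condition (a) of Definition~\ref{def:consym}, applied to any single such context, to conclude that $\theta$ fixes $V$ pointwise; hence $\theta$ permutes only the variables in $\bar{\mathcal{X}}_C$, and in particular $\theta(s)$ also agrees with $s$ on $V$ (so $\theta(s)$ is consistent with $C_s$ as well).

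Next, for any state $s$, since $\theta$ is by hypothesis a contextual symmetry of $\mathcal{G}$ under $C_s$ and $s$ is consistent with $C_s$, Theorem~\ref{thm:probequi} directly yields $P(s)=P(\theta(s))$. As $s$ ranges over all states of $\mathcal{G}$, $\theta$ globally preserves the joint distribution $P$, which is the defining probabilistic property of an orbital symmetry.

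To recover the syntactic statement of Definition~1 (that $\theta$ maps the weighted formulas of $\mathcal{G}$ back onto themselves), I would separately invoke condition (b) of the hypothesis at every context $C$ with $\mathcal{X}_C=V$. This gives $\{(\theta(f_k)|_C, w_k)\}_k = \{(f_k|_C, w_k)\}_k$ as multisets (using that $\theta$ fixes $V$, so $\theta(f_k)|_C = \theta(f_k|_C)$). Ranging over all $2^{|V|}$ contexts covers every reduction of every $f_k$, and since each weighted formula of $\mathcal{G}$ is determined by the tuple of its reductions, these per-context multiset equalities assemble into the global identity $\{(\theta(f_k), w_k)\}_k = \{(f_k, w_k)\}_k$.

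The main obstacle I anticipate is this final assembly step: the bijection on formula indices witnessing multiset equality at each $C$ may a priori depend on $C$, so gluing them into a single global bijection requires some care. The cleanest workaround is the distributional route via Theorem~\ref{thm:probequi} sketched above, which bypasses this combinatorial bookkeeping by working directly with the joint probability rather than with the multiset of weighted formulas.
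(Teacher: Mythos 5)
There is no proof of this lemma in the paper to compare against, so I am judging your argument on its own terms, and it has a real gap that neither of your two routes closes.

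Your distributional route is internally sound: every state $s$ is consistent with exactly one context $C_s$ over $V$, and Theorem~\ref{thm:probequi} then gives $P(s)=P(\theta(s))$ for all $s$. But this establishes only that $\theta$ preserves the joint distribution, which is the \emph{conclusion} of the paper's Theorem~2.1 about orbital symmetries, not its hypothesis. Definition~1 of an orbital symmetry is syntactic --- $\theta$ must map the set of weighted formulas onto itself --- and the paper never claims (nor is it true) that every probability-preserving permutation is an orbital symmetry in this sense. So the ``clean workaround'' does not actually discharge the obligation; it proves a weaker statement.

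The obstacle you flag in the syntactic route is not mere bookkeeping: the per-context bijections genuinely need not glue. Take $\X=\{A,B,C\}$, $V=\{C\}$, $\G=\{(A,w),\,(B\vee C,w),\,(B\vee\neg C,w)\}$, and let $\theta$ swap $A$ and $B$ while fixing $C$. Under either context the reduced model is $\{(A,w),(B,w),(\top,w)\}$ (up to which clause trivializes), which is invariant under the swap, so $\theta$ is a contextual symmetry under both contexts over $V$. Yet $\theta(\G)=\{(B,w),(A\vee C,w),(A\vee\neg C,w)\}$ is not the same set of weighted formulas as $\G$, even reading formulas as Boolean functions; only the aggregate $\sum_k w_k f_k = w(A+B+1)$ is invariant. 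The failure is exactly that two distinct global formulas ($B\vee C$ and $B\vee\neg C$) each agree with the image of a third ($A$) at one context apiece, so ``a formula is determined by the tuple of its reductions'' does not let you assemble a single global bijection. The upshot is that the lemma is only salvageable under the distributional reading of ``orbital symmetry'' (i.e., $\theta$ preserves $P$, or equivalently preserves the log-potential $\sum_k w_k f_k$), or under an extra hypothesis forcing the per-context matchings to be consistent; your proof should say explicitly which of these it is establishing rather than presenting the distributional argument as a proof of the syntactic claim.
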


We now distinguish the notions of context and contextual symmetries from two other related concepts.  First, a context is different from evidence.
External information in the form of evidence modifies the underlying
distribution represented by the graphical model. In contrast, a context  has no effect on the underlying distribution.

Second, it might be tempting to confuse contextually symmetric states with
contextually independent states \cite{boutilier&al96}. In the example of Figure \ref{fig:Example}(a) given Genre=``Thriller'', A and B are contextually independent, i.e., probability of A does not change depending on B. For this context A and B are non-symmetric. For Genre=``Romantic'' A and B are symmetric but not independent. 

Finally, in Section \ref{sec:related}, we discuss the relationship between contextual symmetries and the recent notion of conditional decomposability \cite{niepert&broeck14}. 

\subsection{Computing contextual symmetries}
\label{subsec:compute}

Computing contextual symmetries for $\mathcal{G}$ under a context $C$ is equivalent to computing orbital symmetries on the reduced model $\mathcal{G}_C^r=\{f_k^r, w_k^r\}_{k=1}^{m^r}$. To compute orbital symmetries we adapt the procedure from Niepert \shortcite{niepert12}. Following Niepert, we describe the construction when each $f_k^r$ is a clause, though it can be extended to the more general case.

Niepert's procedure creates a colored graph, with two nodes corresponding to every variable (one each for the positive and negative state), and one node for every formula $f_k^r$. Edges exist between the positive and negative states of every variable, and also between the formula nodes and the variable nodes (either positive or negative) appearing in the formula. Finally, colors are assigned to nodes based on the following criteria: (a) every positive variable node is assigned a common color, (b) every negative variable node is assigned a different common color, and (c) every unique formula weight $w_k^r$ is assigned a new color. The formula nodes $f_k^r$ inherit the color associated with their weight $w_k^r$. 

This color graph is then passed through a graph isomorphism solver (e.g., Saucy), which 
computes the automorphism group for $\mathcal{G}_C^r$. This is equivalent to computing contexual automorphism group for $\G$ under $C$:

\begin{thm}
The automorphism group for the color graph of the reduced graphical model $\mathcal{G}_C^r$ along with an identity mapping of the context variables gives a contextual automorphism group of $\G$ under $C$. 
\end{thm}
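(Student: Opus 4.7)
The plan is to verify the two requirements of Definition~\ref{def:consym} for every lifted permutation, and then confirm that the lifted set inherits the group structure from the colored-graph automorphism group.

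First, I would invoke Niepert's result (already stated just above as the correctness of the reduction): the color-graph construction applied to $\mathcal{G}_C^r$ produces a group whose elements, restricted to variable nodes (and with positive/negative nodes moving together because of the edge between them and the different common colors), are exactly the orbital symmetries of the reduced graphical model $\mathcal{G}_C^r$. Call this group $\Theta^r$, whose elements $\theta^r$ act on $\bar{\mathcal{X}}_C$ (since $\mathcal{G}_C^r$ is defined only over $\bar{\mathcal{X}}_C$).

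Next, I would define the extension map $\phi$ sending each $\theta^r \in \Theta^r$ to the permutation $\theta$ of $\mathcal{X}$ given by $\theta(X_i)=X_i$ for $X_i\in \mathcal{X}_C$ and $\theta(X_i)=\theta^r(X_i)$ for $X_i\in \bar{\mathcal{X}}_C$. Let $\Theta_C=\phi(\Theta^r)$. I would then check the two clauses of Definition~\ref{def:consym} for any $\theta\in\Theta_C$: clause (a) is immediate from the identity assignment on $\mathcal{X}_C$, and clause (b) is immediate because the very $\theta^r$ whose extension is $\theta$ is an orbital symmetry of $\mathcal{G}_C^r$ by the previous paragraph.

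Finally, I would show that $\Theta_C$ is a permutation group. Identity in $\Theta^r$ extends to identity on $\mathcal{X}$. Because $\phi$ agrees with $\theta^r$ on $\bar{\mathcal{X}}_C$ and is identity on $\mathcal{X}_C$ (which is disjoint from $\bar{\mathcal{X}}_C$), one has $\phi(\theta^r_1)\circ\phi(\theta^r_2)=\phi(\theta^r_1\circ\theta^r_2)$ and $\phi(\theta^r)^{-1}=\phi((\theta^r)^{-1})$ on each of the two disjoint parts, so $\Theta_C$ is closed under composition and inverses. Combined with the per-element verification above, this exhibits $\Theta_C$ as a permutation group each of whose elements is a contextual symmetry of $\mathcal{G}$ under $C$, matching the definition of a contextual automorphism group.

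The only delicate step is checking that the extension $\phi$ really gives a well-defined homomorphism; this needs the observation that $\mathcal{X}_C$ and $\bar{\mathcal{X}}_C$ partition $\mathcal{X}$ and that the underlying $\theta^r$ acts as a permutation of $\bar{\mathcal{X}}_C$ alone (so composition on the two blocks can be handled independently). Everything else is a direct unfolding of the definitions together with Niepert's construction applied to the reduced model.
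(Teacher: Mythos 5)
Your proposal is correct and follows the approach the paper intends: the paper states this theorem without an explicit proof in the text, treating it as a direct consequence of Niepert's correctness result for the color-graph construction (applied here to $\mathcal{G}_C^r$) together with an unfolding of the definition of contextual symmetry, which is exactly what you do. Your extra care in checking that the identity-extension map is a group homomorphism (so that the lifted set is indeed a permutation group, not just a set of contextual symmetries) and that positive/negative variable nodes move consistently is the right level of rigor for the omitted details.
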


Note that in case we have any evidence $E$ available, the reduced model over which we induce a colored graph corresponds to $\mathcal{G}_{C\cup E}^r$. This is in contrast with original Niepert's procedure, where evidence nodes are not removed from the color graph and instead act as additional formulas for the original graphical model with infinity weights. This elimination of evidence nodes helps discover many more symmetries in the corresponding color graph while still preserving correctness. For example, if the model is represented by formulas \{($P \vee R$, $w_1$), ($Q$, $w_1$)\}, and evidence is ($\neg{R}$), $P$ and $Q$ become symmetric only if $R$ is eliminated from the color graph, and not in Niepert's procedure.

\section{Contextual MCMC} \label{section:mcmc}
We now extend the Orbital MCMC algorithm from Section \ref{sec:background-mcmc} so that it can exploit contextual symmetries; our algorithm is named \conmcmc, and is parameterized by $\alpha\in[0,1)$. Orbital MCMC  reduces mixing times over original MCMC,  because it can easily transition between high probability states falling in the same orbit, which may otherwise be separated by low probability regions. Unfortunately, as Figure \ref{fig:Example} demonstrates, a domain may have little orbital symmetry, but still important contextual symmetry. \conmcmc($\alpha$) exploits these for inference.

We are given a set of context variables $V \subset \mathcal{X}$ (more on this later). Let $\mathcal{C}_V$ denote the set of all possible contexts involving all the variables in $V$. Overloading the notation, we will use $\mathcal{C}_V(s)$ to denote the (unique) context in $\mathcal{C}_V$ consistent with state $s$. We compute contextual symmetries $\Theta_{C}$ under each context $C \in \mathcal{C}_V$ using the algorithm from Section \ref{sec:framework}. We are also given an original regular Markov chain $M$ that converges to the desired probability distribution $\pi(s)$. \conmcmc($\alpha$) runs a {\em Contextual Markov Chain} $M_{con}(\alpha)$ that samples a state $s^{(t+1)}$ from $s^{(t)}$ as follows: 

\begin{enumerate}
\item {\em Gibbs-orig move:} We sample an intermediate state $s'^{(t+1)}$ from the current state $s^{(t)}$ as: \\
\quad \quad (a) with probability $\alpha$ (Gibbs): flip a random context variable in $s^{(t)}$ using Gibbs transition probability. \\
\quad \quad (b) with probability $1$-$\alpha$ (original): make the move from $s^{(t)}$ based on the transition probability in $M$. 
\item {\em con-orbital move: } Let $C = \mathcal{C}_V(s'^{(t+1)})$ be the context consistent with $s'^{(t+1)}$. Let $\Gamma_{\Theta_{C}}(s'^{(t+1)})$ denote the contextual orbit of $s'^{(t+1)}$ under the context $C$. Sample a state $s^{(t+1)}$ uniformly at random from $\Gamma_{\Theta_{C}}(s'^{(t+1)})$. 
\end{enumerate}

When $\alpha=0$ our algorithm reduces to a direct extension of Orbital MCMC, where in second step, we sample uniformly from a contextual orbit instead of the original orbits. In the more interesting case of $\alpha > 0$, we enable the Markov chain to move more freely between different contexts using a Gibbs flip over the context variables. This Gibbs transition helps us carry over the effect of symmetries exploited under one context (via the orbital moves in step 2) to others. This can be especially useful  when symmetries are unevenly distributed across multiple contexts (as also confirmed by our experiments). 

In order to sample a state uniformly at random from a contextual orbit, we use the product replacement algorithm \cite{pak00} as described and used by Niepert \shortcite{niepert12}. Recall that since we are working with contextual permutations, the context variables are mapped to themselves and we are guaranteed to not change the context. Next, we show that \conmcmc($\alpha$) converges to the desired stationary distribution $\pi(s)$. We need the following lemma.
\begin{lemma}\label{lem:M_go_stationary}
Let $M_1$ and $M_2$ be two Markov chains defined over a finite state space $S$ with transition probability functions $P_1$ and $P_2$, respectively, such that $\pi(s)$ is a stationary distribution for both $P_1,P_2$, i.e., $\pi(s)=\sum_{r \in S} \pi(r)*P_i(r \rightarrow s)$, $i \in \{1,2\}$. Further, let $M_2$ be regular. Then, the Markov chain $M'$ with the transition function $P'(s \rightarrow r) = \alpha*P_1(s \rightarrow r) + (1-\alpha)*P_2(s \rightarrow r)$ is also regular and has a unique stationary distribution $\pi(s)$ for $\alpha \in [0,1)$.
\end{lemma}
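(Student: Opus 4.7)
The plan is to establish the two required properties of $M'$ separately: first that $\pi$ is stationary for $M'$, then that $M'$ is regular (from which uniqueness of the stationary distribution follows by standard Markov chain theory on finite state spaces).

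For stationarity, I would just exploit the linearity of $P'$ in $P_1$ and $P_2$. For any $s \in S$,
\begin{align*}
\sum_{r \in S} \pi(r)\, P'(r \to s) &= \alpha \sum_{r \in S} \pi(r)\, P_1(r \to s) + (1-\alpha) \sum_{r \in S} \pi(r)\, P_2(r \to s) \\
&= \alpha\, \pi(s) + (1-\alpha)\, \pi(s) = \pi(s),
\end{align*}
using the assumed stationarity of $\pi$ under both $P_1$ and $P_2$. So $\pi$ is a stationary distribution of $M'$.

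For regularity, the idea is that $M'$ dominates a scaled version of $M_2$, so any positive entry of some power of $P_2$ yields a positive entry of the same power of $P'$. Concretely, since $P_1$ is a stochastic kernel, $P'(r \to s) \ge (1-\alpha) P_2(r \to s)$ for all $r, s \in S$. Iterating, for any $n \ge 1$,
\begin{align*}
P'^n(r \to s) &= \sum_{r_1, \ldots, r_{n-1}} \prod_{i=0}^{n-1} P'(r_i \to r_{i+1}) \\
&\ge (1-\alpha)^n \sum_{r_1, \ldots, r_{n-1}} \prod_{i=0}^{n-1} P_2(r_i \to r_{i+1}) = (1-\alpha)^n\, P_2^n(r \to s),
\end{align*}
where $r_0 = r$ and $r_n = s$. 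Since $M_2$ is regular, there exists $n_0$ such that $P_2^{n_0}(r \to s) > 0$ for all $r, s \in S$; combined with $\alpha < 1$, this gives $P'^{n_0}(r \to s) > 0$ for all $r, s$, so $M'$ is regular as well.

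Finally, uniqueness of the stationary distribution is a textbook consequence of regularity on a finite state space (Perron--Frobenius), so $\pi$ is the unique stationary distribution of $M'$. The only subtle point in the argument is the domination step for $P'^n$; here I would make sure the telescoping over intermediate states is clearly written out (as above) so that the $(1-\alpha)^n$ factor is correctly separated from the $P_2^n$ path sum. Everything else is bookkeeping.
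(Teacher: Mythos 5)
Your proof is correct: the linearity argument gives stationarity of $\pi$ under $P'$, the domination bound $P'^{n}(r \to s) \ge (1-\alpha)^{n} P_2^{n}(r \to s)$ (valid since $\alpha < 1$) transfers regularity from $M_2$ to $M'$, and uniqueness then follows from the standard Perron--Frobenius theory for regular finite chains. The paper states this lemma without proof, and your argument is exactly the standard one the authors rely on.
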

Let $M^{go}(\alpha)$ refer to the family of Markov chains constructed using only step 1 of our algorithm i.e. no orbital moves. $M$ is regular with the stationary distribution $\pi(s)$. Further, each individual Gibbs flip over a variable satisfies stationarity with respect to the underlying distribution $\pi(s)$ \cite{koller&friedman09}. Hence, using Lemma~\ref{lem:M_go_stationary}, $M^{go}(\alpha)$ is regular with $\pi(s)$ as a stationary distribution.
\begin{theorem}
The family of contextual Markov chains $M^{con}(\alpha)$  constructed using \conmcmc($\alpha$)  converges to the stationary distribution of original Markov chain $M$ for any choice of context variables $V$ and $\alpha \in [0,1)$.
\end{theorem}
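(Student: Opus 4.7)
The plan is to show two things in sequence: (i) the con-orbital move (step 2 alone) preserves $\pi$ as a stationary distribution, and (ii) the Gibbs-orig move (step 1 alone), which is exactly $M^{go}(\alpha)$, is regular with stationary distribution $\pi$. Since the transition kernel of $M^{con}(\alpha)$ is the composition of these two kernels, $\pi$ will be preserved; regularity of the composition then follows because the identity is always a contextual symmetry, so with positive probability the orbital step does nothing and $M^{con}(\alpha)$ inherits regularity from $M^{go}(\alpha)$.

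First I would verify that step 2 preserves $\pi$. Fix a context $C$ and look at the sub-kernel $P_{orb}$ restricted to the set $S_C$ of states consistent with $C$. Contextual orbits partition $S_C$, and for any orbit $\Gamma$, the transition $P_{orb}(s' \to s'') = 1/|\Gamma|$ if $s'' \in \Gamma$ and $0$ otherwise. By Theorem~\ref{thm:probequi}, all states in $\Gamma$ have equal $\pi$-probability, so $\sum_{s' \in \Gamma} \pi(s') \cdot (1/|\Gamma|) = \pi(s'')$ for any $s'' \in \Gamma$, and summing over orbits gives $\sum_{s'} \pi(s') P_{orb}(s' \to s'') = \pi(s'')$. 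Note crucially that the orbital step never leaves $S_C$, so it cannot perturb the marginal over contexts either.

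Second I would invoke the discussion immediately preceding the theorem: each individual Gibbs flip preserves $\pi$, the original chain $M$ is regular with stationary distribution $\pi$, and Lemma~\ref{lem:M_go_stationary} applied to the convex combination with weight $\alpha$ yields that $M^{go}(\alpha)$ is regular with stationary distribution $\pi$. Finally I would combine the two: if $T^{go}$ and $T^{orb}$ both have $\pi$ as a left eigenvector with eigenvalue $1$, so does $T^{con} = T^{go} T^{orb}$, via $\pi T^{con} = (\pi T^{go}) T^{orb} = \pi T^{orb} = \pi$. For uniqueness of the stationary distribution, I would argue regularity of $M^{con}(\alpha)$: since the identity permutation lies in every $\Theta_C$, the orbital step retains the current state with probability at least $1/|\Gamma_{\Theta_C}(s'^{(t+1)})| > 0$, hence $P^{con}(s \to r) \ge (1/|\X|^{|\X|}) P^{go}(s \to r)$ for every $s, r$, so aperiodicity and irreducibility of $M^{go}(\alpha)$ transfer to $M^{con}(\alpha)$.

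The main obstacle, and the only place that requires genuine care, is the stationarity of the orbital step: one must be sure that contextual orbits do indeed partition $S_C$ (this follows from $\Theta_C$ being a group of bijections fixing $\X_C$ pointwise) and that uniform sampling within an orbit is a doubly stochastic sub-kernel on that orbit. Everything else is a bookkeeping argument combining Theorem~\ref{thm:probequi}, Lemma~\ref{lem:M_go_stationary}, and the observation that the identity permutation gives the composed chain a positive-probability ``do-nothing'' branch that inherits regularity from $M^{go}(\alpha)$.
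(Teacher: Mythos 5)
Your proof is correct and takes essentially the same route as the paper's: both rely on Lemma~\ref{lem:M_go_stationary} for the stationarity and regularity of $M^{go}(\alpha)$, on Theorem~\ref{thm:probequi} to show the uniform orbital step preserves $\pi$, and on the positive probability of the orbital move returning the current state to get regularity of the composed chain. The only difference is organizational --- you factor the kernel as $T^{go}T^{orb}$ and check stationarity of each factor, whereas the paper expands the stationarity equation for the composed kernel directly over the contextual orbit; your explicit observation that contextual orbits partition the context-consistent states is precisely the fact the paper uses implicitly in its first step.
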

\begin{proof}
Let $\pi(s)$ be the stationary distribution of $M$. Since $M^{go}(\alpha)$ is regular it is easy to see that $M^{con}(\alpha)$ is also regular (there is always a non-zero probability of coming back to the same state in an orbital move). Therefore, $M^{con}(\alpha)$ converges to a unique stationary distribution. Then, we only need to show that $\pi(s)$ is the stationary distribution of $M^{con}(\alpha)$.
Let $S=\{0,1\}^{n}$ denote the set all of all the states. For $r,s \in S$, let $P^{go}[\alpha](s\rightarrow r)$ and $P^{con}[\alpha](s\rightarrow r)$ represent the transition probability functions of $M^{go}[\alpha]$ and $M^{con}[\alpha]$, respectively. In order to show that $M^{con}[\alpha]$ also converges to $\pi(s)$, we need to show that:

\begin{equation}
\pi(s) = \sum_{r \in S} \pi(r)  P^{con}[\alpha] (r \rightarrow s) 
\end{equation}
The RHS of the above equation can be written as:
\begin{eqnarray*}
  &=& \sum_{r \in S} \pi(r) \sum_{s' \in \Gamma_{\Theta_{\mathcal{C}_{V}(s)}}(s)} P^{go}[\alpha] (r \rightarrow s')  \frac{1}{|\Gamma_{\Theta_{\mathcal{C}_{V}(s)}}(s)|} \\
       &=& \sum_{r \in S} \sum_{s' \in \Gamma_{\Theta_{\mathcal{C}_{V}(s)}}(s)} \pi(r) P^{go}[\alpha] (r \rightarrow s')  \frac{1}{|\Gamma_{\Theta_{\mathcal{C}_{V}(s)}}(s)|}\\
\end{eqnarray*}
\begin{eqnarray*}
&=& \sum_{s' \in \Gamma_{\Theta_{\mathcal{C}_{V}(s)}}} \left[ \sum_{r \in S}  \pi(r)P^{go}[\alpha] (r \rightarrow s') \right]  \frac{1}{|\Gamma_{\Theta_{\mathcal{C}_{V}(s)}}(s)|}\\
&=& \sum_{s' \in \Gamma_{\Theta_{\mathcal{C}_{V}(s)}}} \pi(s')  \frac{1}{|\Gamma_{\Theta_{\mathcal{C}_{V}(s)}}(s)|}\\
&=& \sum_{s' \in \Gamma_{\Theta_{\mathcal{C}_{V}(s)}}} \pi(s)  \frac{1}{|\Gamma_{\Theta_{\mathcal{C}_{V}(s)}}(s)|}\\
       &=& \pi(s)
\end{eqnarray*}
Here, recall that $\Theta_{\mathcal{C}_{V}(s)}$ denotes the contextual automorphism group for the (unique) context $\mathcal{C}_V(s)$ consistent with the state $s$, and $\Gamma_{\Theta_{\mathcal{C}_{V}(s)}}(s)$ denotes the corresponding orbit. Step 1 above follows from the definition of contextual orbital move. Step 4 follows from the stationarity of $M^{go}[\alpha]$. Step 5 follows from the fact that all the states in the same contextual orbit have the same probability (Theorem \ref{thm:probequi}).
\end{proof}

\section{Experimental Evaluation}
\label{sec:expt}

Our experiments evaluate the use of contextual symmetries for faster inference in graphical models. We compare our approach against Orbital MCMC, which is the only available algorithm that exploits symmetries in a general MCMC framework. We also compare with vanilla Gibbs sampling, which does not exploit any symmetries. We implement \conmcmc($\alpha$) as an extension of the original Orbital MCMC implementation\footnote{\url{ https://code.google.com/archive/p/lifted-mcmc/}} available in the GAP language \cite{gap15}. The existing implementation uses Saucy~\cite{saucy} for graph isomorphism and Gibbs sampler as the base Markov chain.  We experiment on two versions each of two different domains, with context variables pre-specified. We next describe our domains.

\subsection{Domains and Methodology}

\begin{figure*}
\framebox{
\subfigure[]{
{\includegraphics[width=0.24\textwidth]{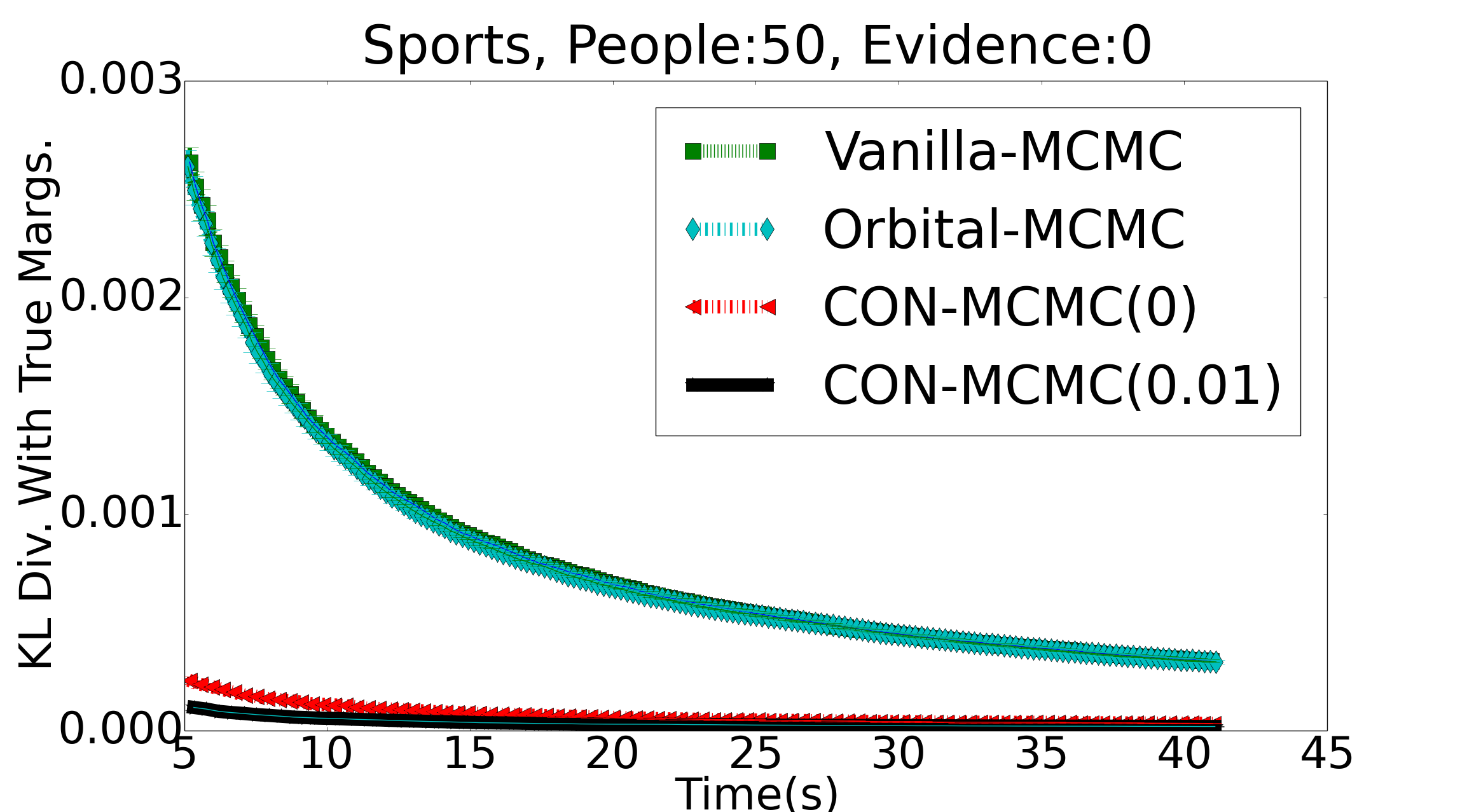}}
{\includegraphics[width=0.24\textwidth]{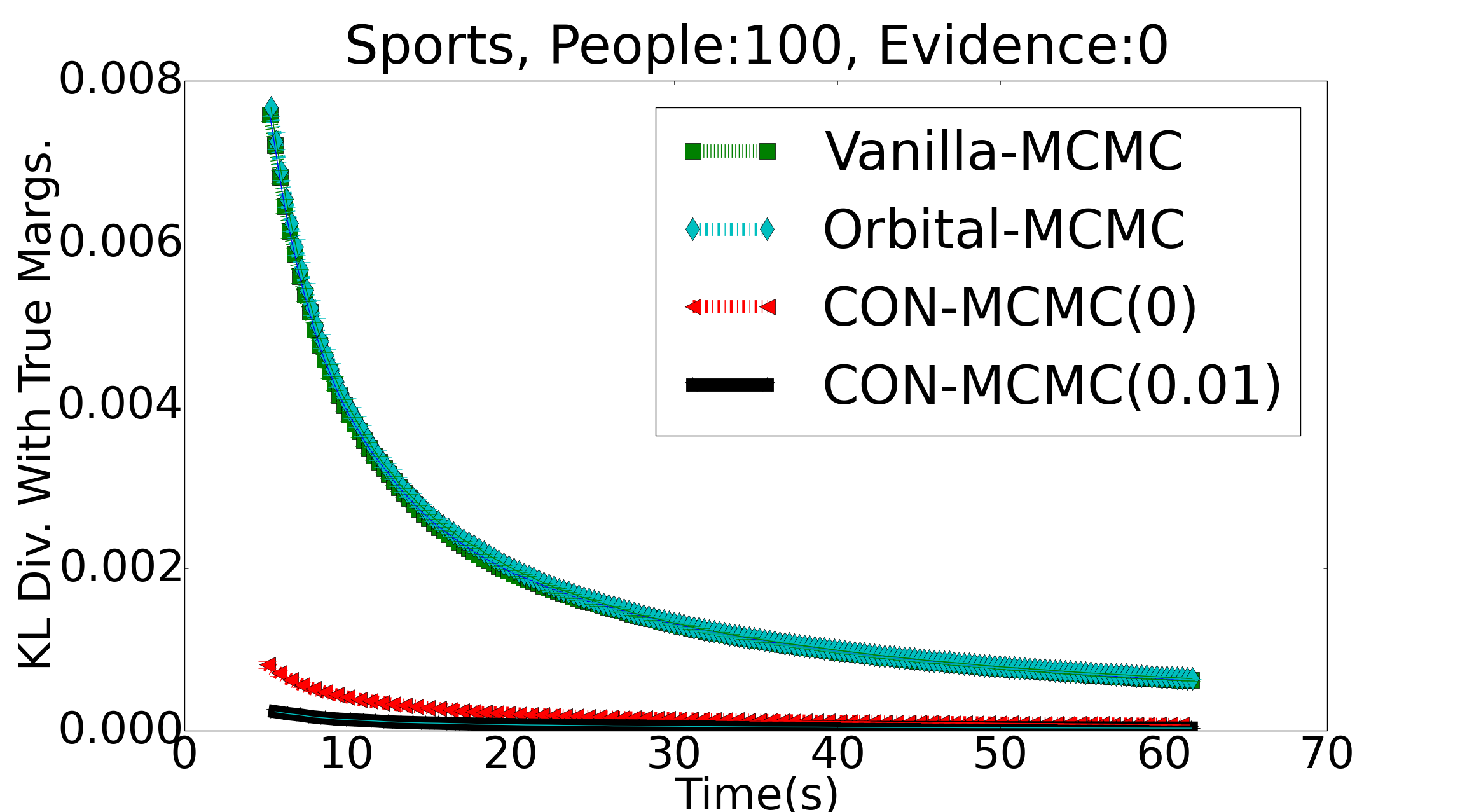}}
{\includegraphics[width=0.24\textwidth]{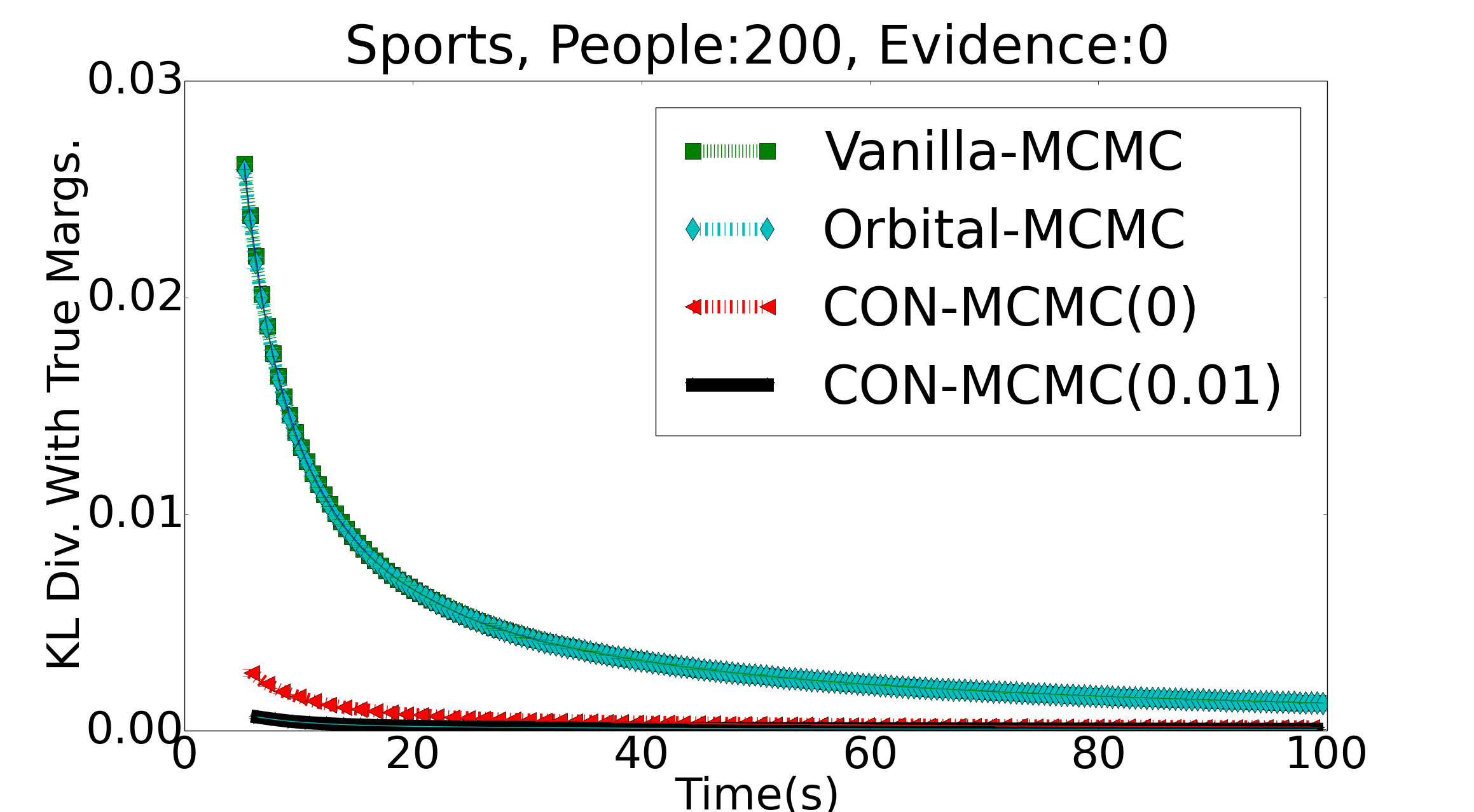}}}\label{fig:sports_size}}
\subfigure[]{
{\includegraphics[width=0.23\textwidth]{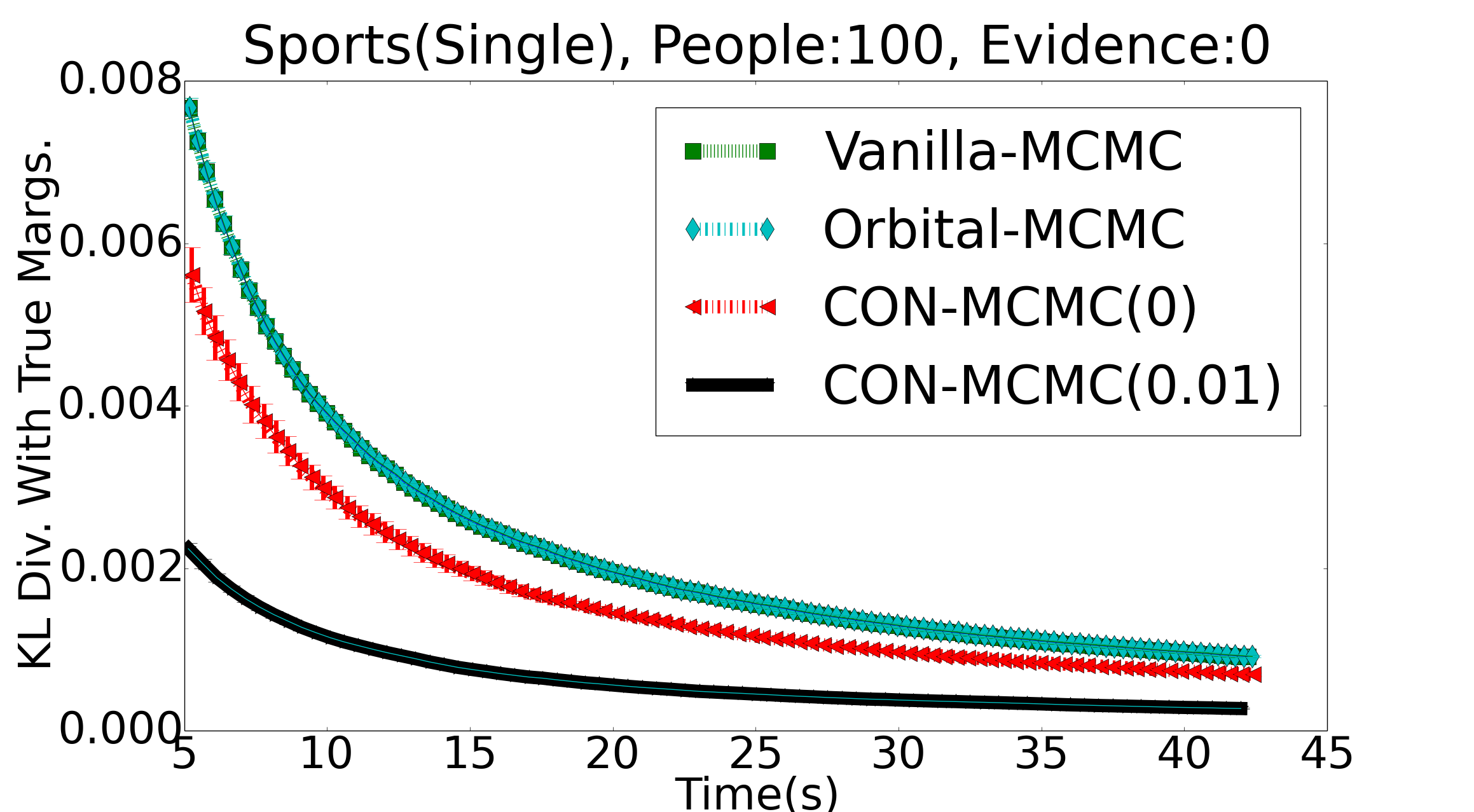}}\label{fig:sports_one_sided}}

\framebox{
\subfigure[]{
{\includegraphics[width=0.24\textwidth]{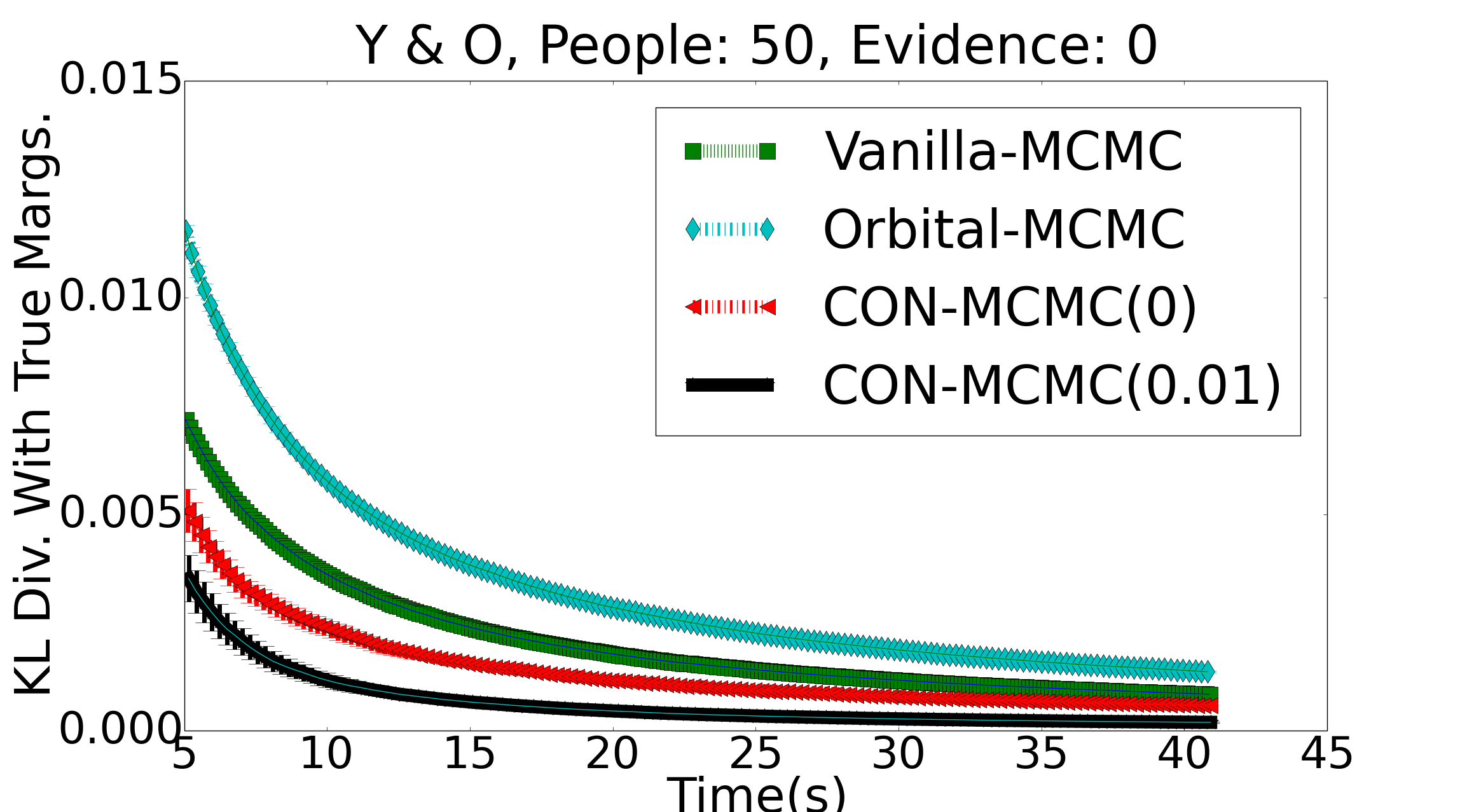}}
{\includegraphics[width=0.24\textwidth]{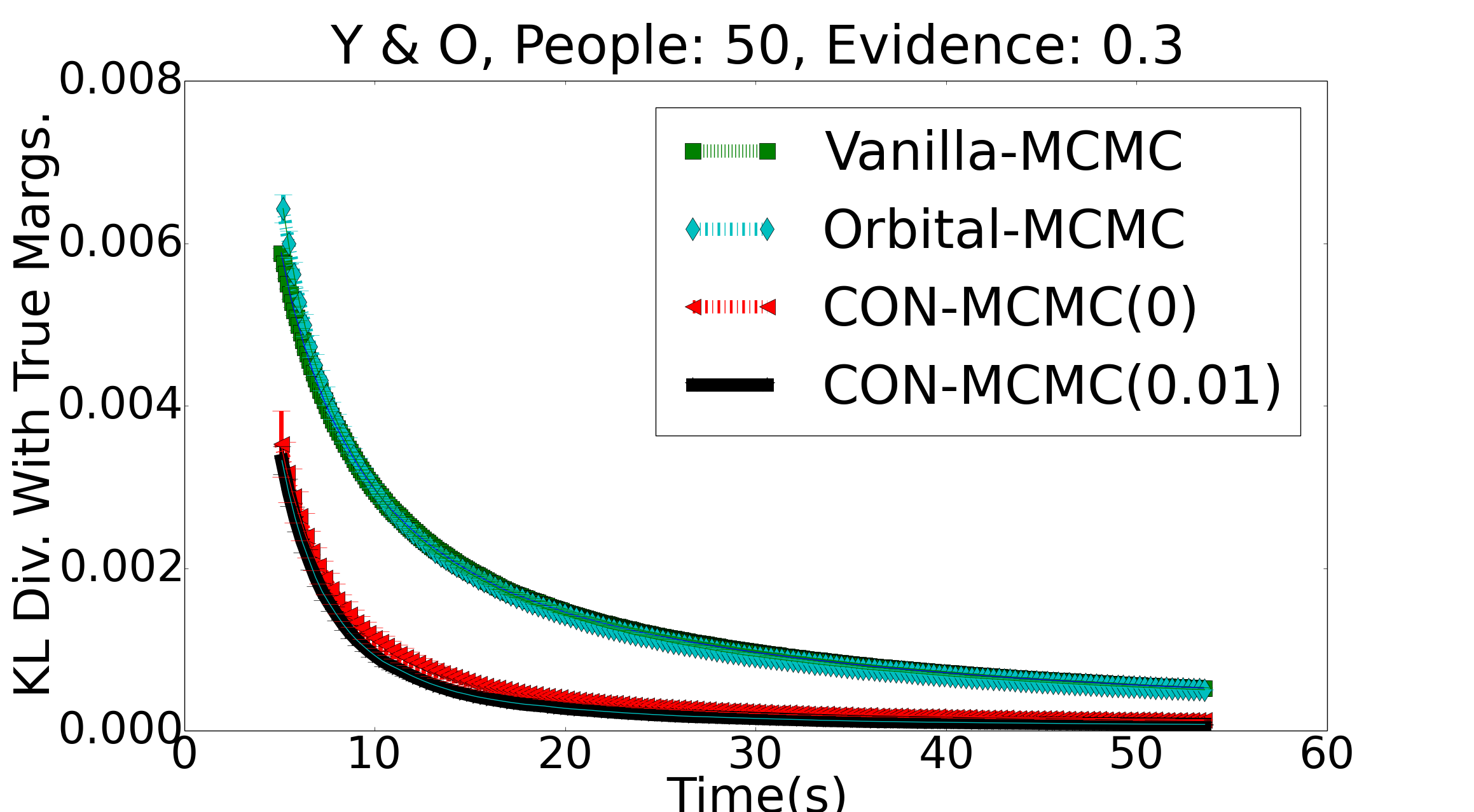}}
{\includegraphics[width=0.24\textwidth]{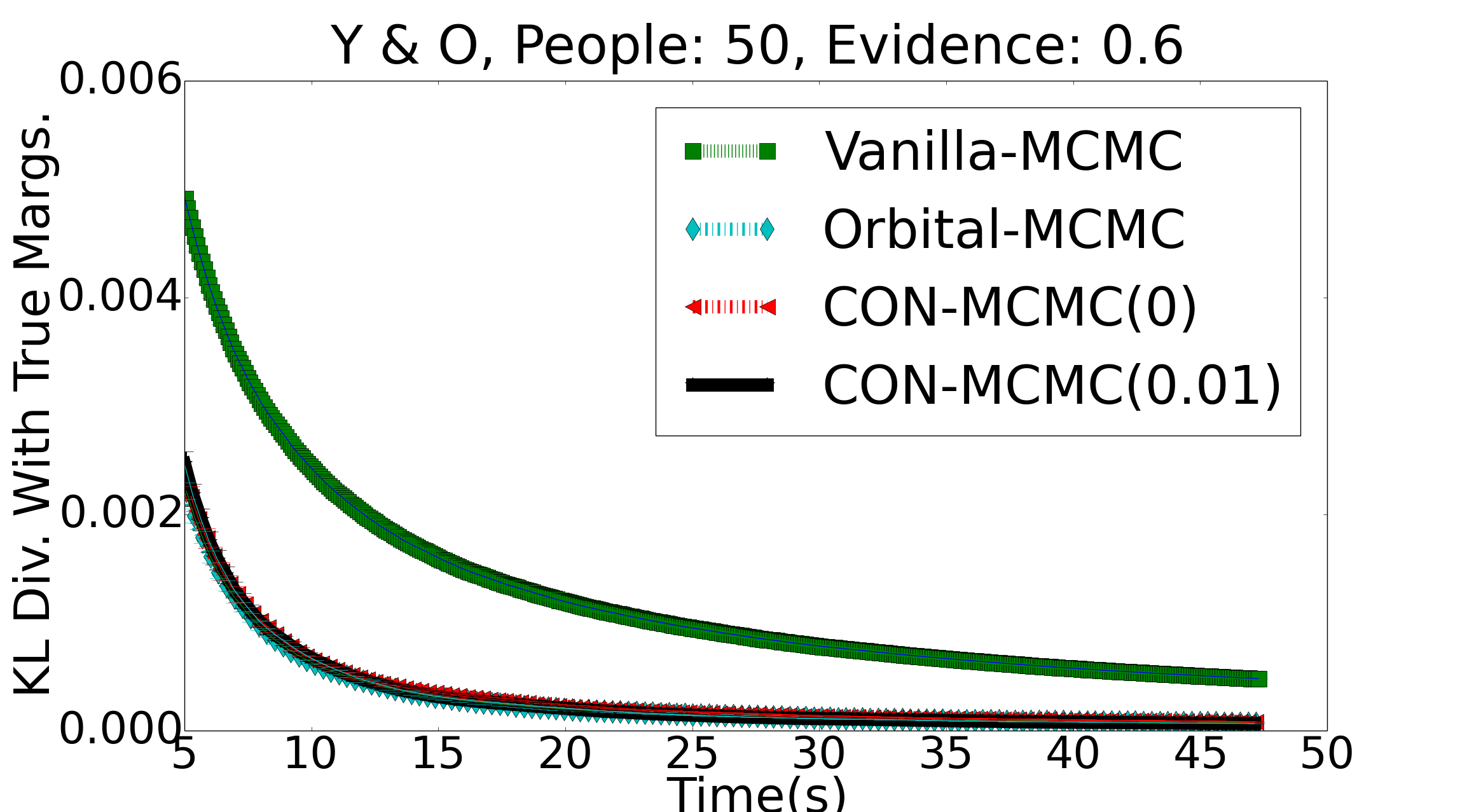}}}\label{fig:fs_evidence}}
\subfigure[]{
{\includegraphics[width=0.23\textwidth]{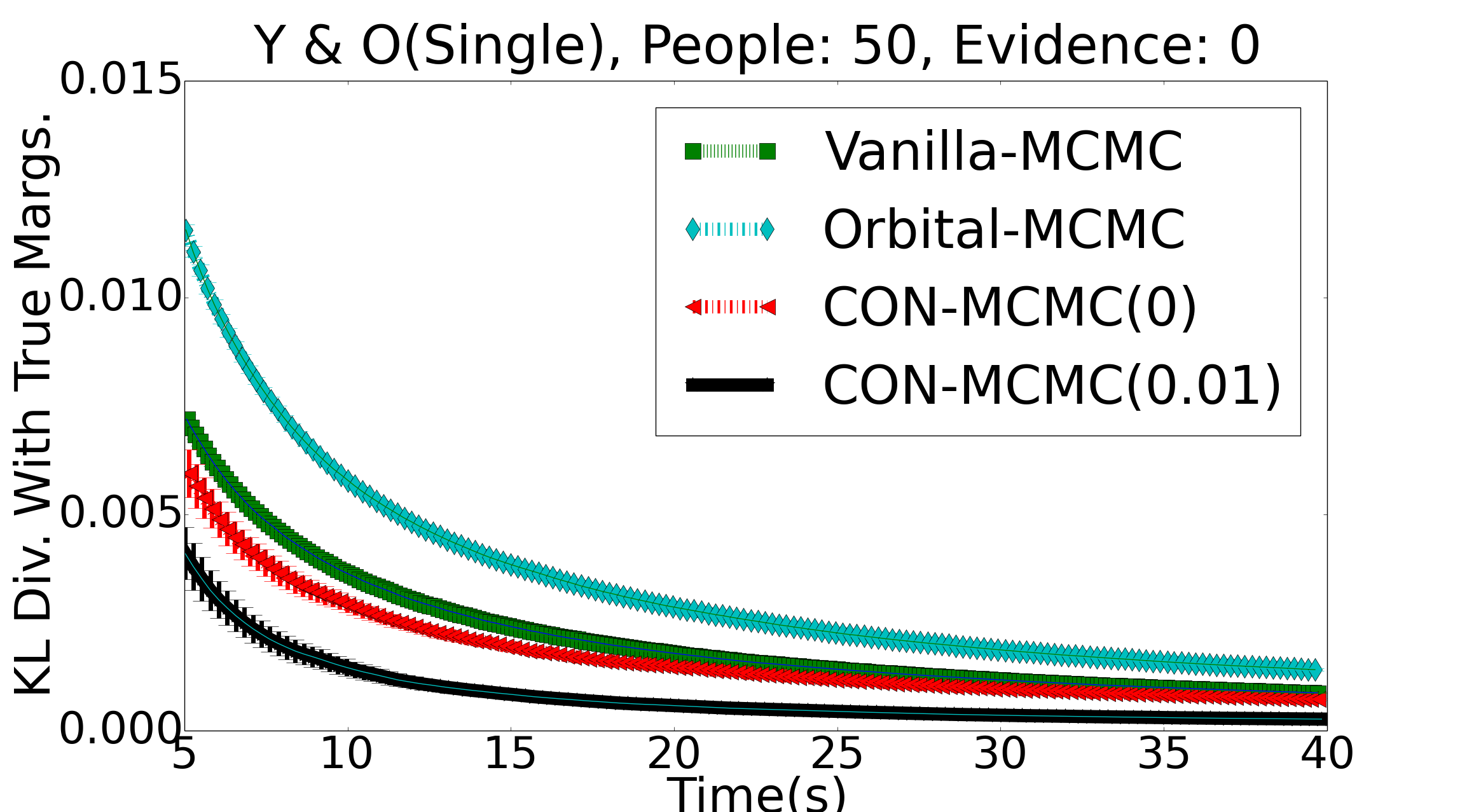}}\label{fig:fs_one_sided}}
\caption{{\bf (a)} \conmcmc\ effectiveness increases tremendously with increasing domain sizes. Note that y-axes are on different scales. {\bf (c)} New orbital symmetries are created with increasing evidence, leading to improved performance of Orbital MCMC. {\bf (b, d)} Curves for Sports Network (Single) and Y \& O (Single) respectively -- \conmcmc(0.01) performs the best and vastly outperforms \conmcmc(0).}
\label{fig:results}
\end{figure*}

\noindent{\bf Sports Network: } This Markov network models a group of students who may enter a future sport league, which could be for one of two sports, badminton or tennis (modeled as the variable $Sport$). Each student belongs to one of the dorms on campus. The league accepts both singles as well as doubles entries. For each student $X$, the domain has a variable for playing singles, $S_X$. For each pairs of students $X$, $Y$ coming from the same dorm, we have a variable indicating that they will play doubles together, $D_{XY}$. Multiple students (in the same dorm) train together in training groups, which are different for the two sports. A student's participation in the league for a given sport is (jointly) influenced by the participation of other students in her training group for that sport. Moreover, if two students decide to play singles, it increases the probability that they may also team up to play doubles independent of their training groups. In this domain, different subsets of students in a dorm (based on their training groups) are symmetrical to each other depending upon $Sport$, which becomes a natural choice for the context. In our experiments, we use training groups of $5$ students and dorms with $25$ students each.

\vspace{0.5ex}
\noindent {\bf Young and Old: } This domain is modeled as an MLN and is an extension of the Friends and Smokers (FS) \cite{singla&domingos08} network. $Y\&O$ has a propositional variable $IsYoung$ determining whether we are dealing with a population of youngsters or older folks. For every person $X$ in the domain, we have predicates $Smokes(X)$, $Cancer(X)$ and $EatsOut(X)$. We also have the predicate $Friends(X,Y)$ for every pair of persons. We have rules stating that young persons are more likely to smoke and older people are less likely to smoke. Similarly, we have rules stating that young people are more likely to eat out and old people are less likely to eat out. When the population is young, everyone has the same weight for the smoking rule and slightly different weight (sampled from a Gaussian) for eating out. When the population is old, everyone has a slightly different weight (again sampled from a Gaussian) for smoking and the same weight for eating out. As in the original FS, we have rules stating smoking causes cancer and friends have similar smoking habits. We also have rules stating that cancer and friends variables have low prior probabilities. In this domain, smoking, cancer and friends variables are symmetric to each other when population is young, whereas all eating out variables are symmetric when the population is old. Clearly, $IsYoung$ is a natural choice for context in this domain.

\vspace{0.5ex}

An important property of both these domains is that different contextual symmetries exist for {\em both} assignments of the respective context variables. To test the robustness of \conmcmc\ we further modify these domains so that contextual symmetries exist only on {\em one} of the two assignments of context variable. In $Y\&O$ (Single), we give (slightly) different weights to $EatsOut(X)$ variables when $IsYoung$ is false, i.e., symmetries exist only when $IsYoung$ is true. In Sports Network (Single), $S_X$ variables involved in a training group are symmetric only for tennis; for badminton, each $S_X$  in a group behaves (slightly) differently. We refer to these two variations as the {\it Single} side versions of the original domains.

For these four domains, we plot run time {\em vs.} the KL-divergence between approximate marginal probabilities computed by each algorithm and the true marginals.\footnote{computed by running a Gibbs sampler for sufficiently long time.} For both Orbital MCMC and \conmcmc, the time to compute symmetries is included in the run time. For each problem we run 20 iterations of each algorithm and take the mean of the marginals to reduce variance of the measurements. We also plot the 95\% confidence intervals. We show \conmcmc\ results for $\alpha=0$ and $0.01$, which was chosen based on performance on smaller problem sizes. We perform various control experiments by varying the size of domains, amount of available evidence, marginal posterior probability of the context variable and the value of $\alpha$ parameter. All the experiments are run on a quad-core Intel i-7 processor.

\subsection{Results}
\label{sec:results}
Figures \ref{fig:results} and \ref{fig:results_posterior} show the representative graphs across multiple domains and varying experimental conditions. We find that \conmcmc(0.01) almost always performs the best or at par with the best of other three algorithms. \conmcmc(0) usually performs better than Gibbs and Orbital MCMC, but its performance can be closer to Gibbs or \conmcmc($\alpha$) depending upon the experimental setting. Orbital MCMC does not usually offer much advantage over Gibbs, primarily because these domains don't have many orbital symmetries. For Sports Network, there are no orbital symmetries at all; Orbital MCMC avoids the overhead of the orbital move and performs at par with Gibbs. For $Y\&O$ Orbital MCMC finds a few symmetries, which don't particularly help in reducing mixing time. However, it still incurs the overhead of orbital moves, leading to a significantly worse performance compared to Gibbs.

\vspace{1ex}
\noindent{\bf Variation with Domain Size: } Figure \ref{fig:results}(a) compares the algorithms as we increase the domain size for the Sports network from $50$ to $200$ students. The overall trends remain similar, i.e., \conmcmc\ algorithms outperform Gibbs and Orbital MCMC by huge margins. A closer look reveals that the y-axes are at different scales for the three curves -- the relative edge of \conmcmc\ algorithms increases substantially with larger domain sizes. 

\vspace{1ex}
\noindent{\bf Variation with Amount of Evidence: } Figure \ref{fig:results}(c) compares the performance of the algorithms as we vary the amount of (random) evidence available from $0\%$ to $60\%$ in the $Y\&O$ domain on predicates other than $Friends(X,Y)$ using a domain size of $50$. As earlier, \conmcmc\ algorithms outperform others. We observe that the relative gain of \conmcmc\ algorithms with Orbital MCMC decreases with increasing evidence (for 30\% evidence Orbital MCMC overlaps with Gibbs, for 60\% evidence, Orbital MCMC overlaps with \conmcmc). We believe that this is due the fact that more evidence tends to disconnect the network introducing additional symmetries which can be exploited by Orbital MCMC. Nevertheless, \conmcmc\ algorithms 
perform at least as well as Orbital MCMC for all values of evidence that we tested on.

\begin{figure*}
\framebox{
\subfigure[]{
{\includegraphics[width=0.24\textwidth]{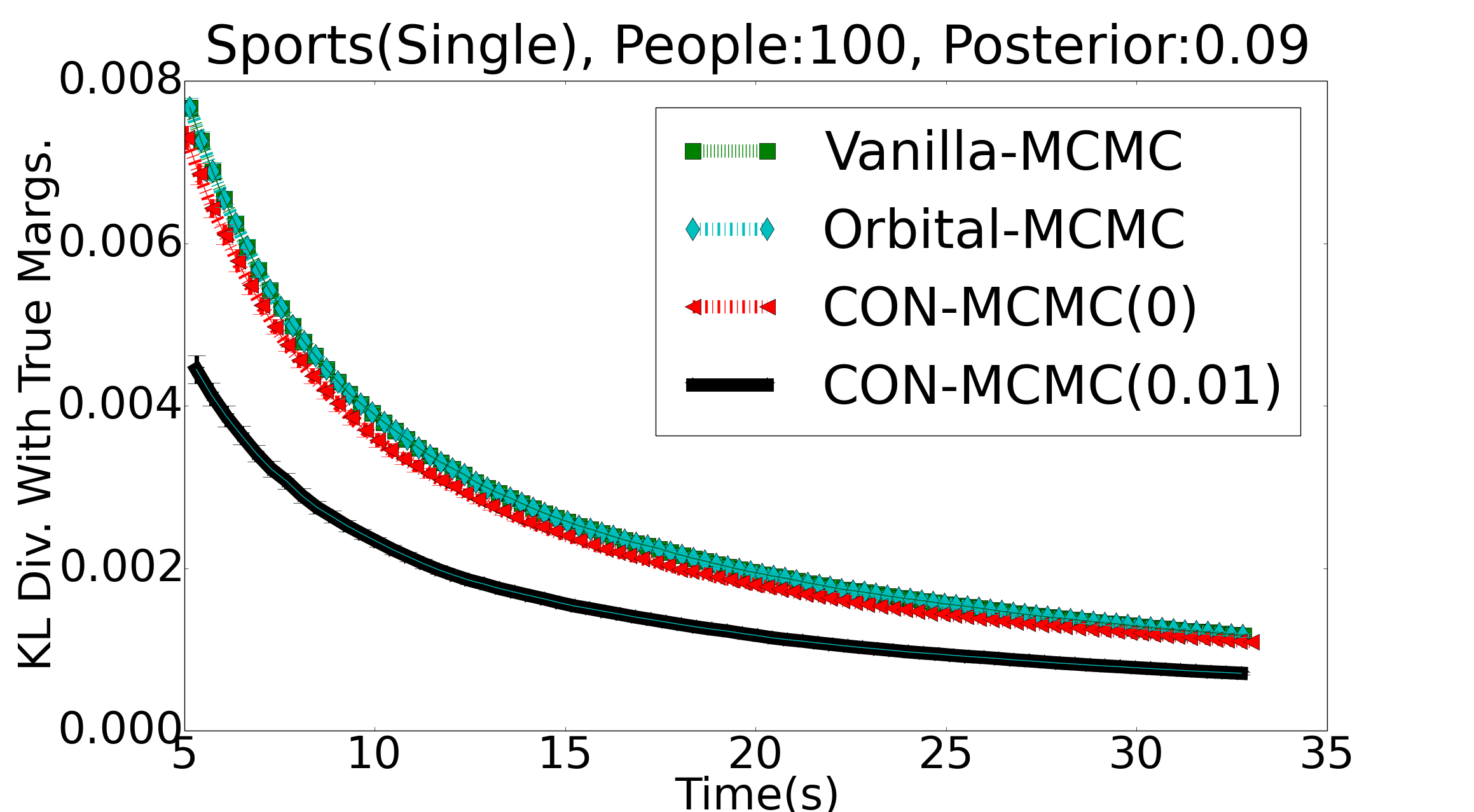}}
{\includegraphics[width=0.24\textwidth]{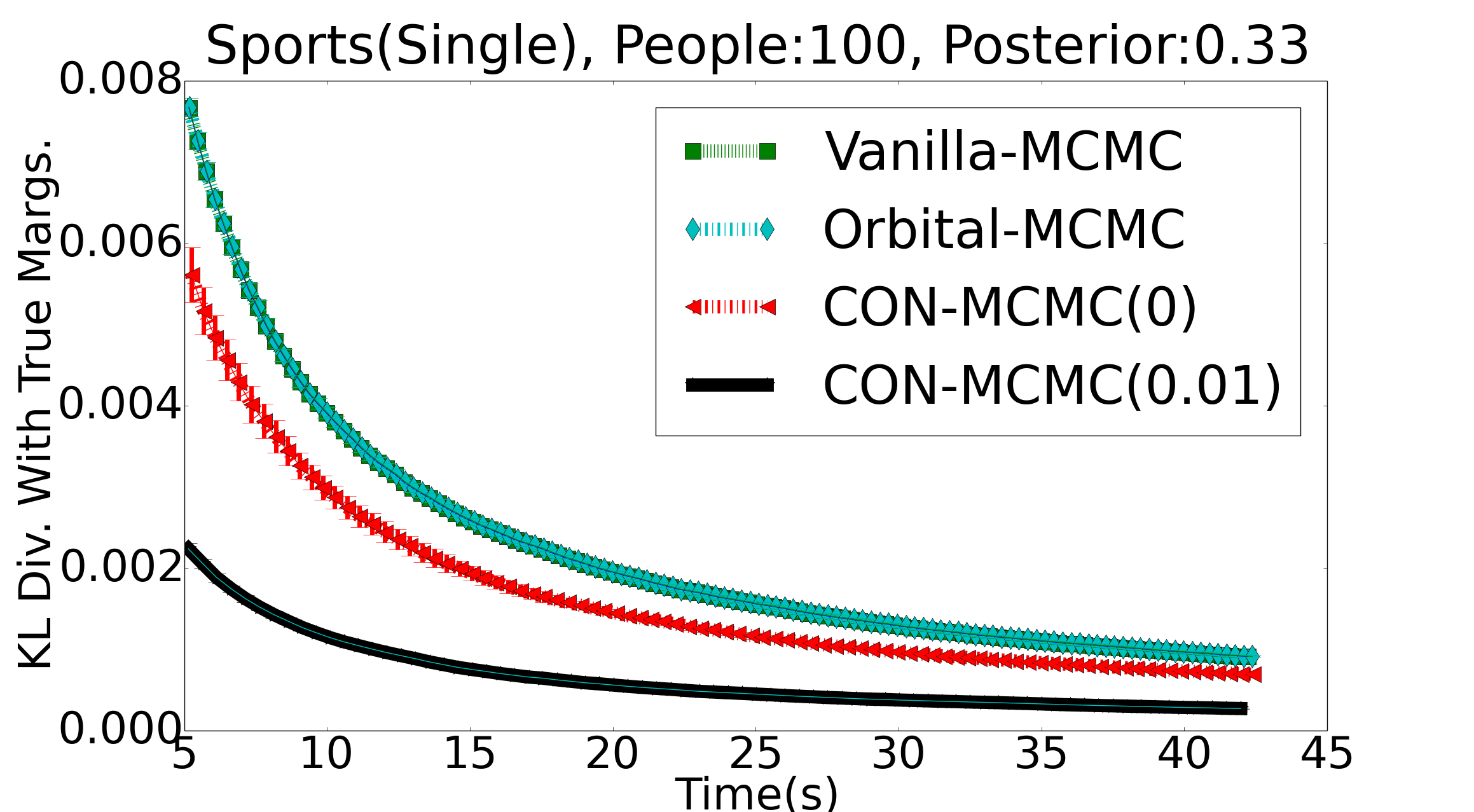}}
{\includegraphics[width=0.24\textwidth]{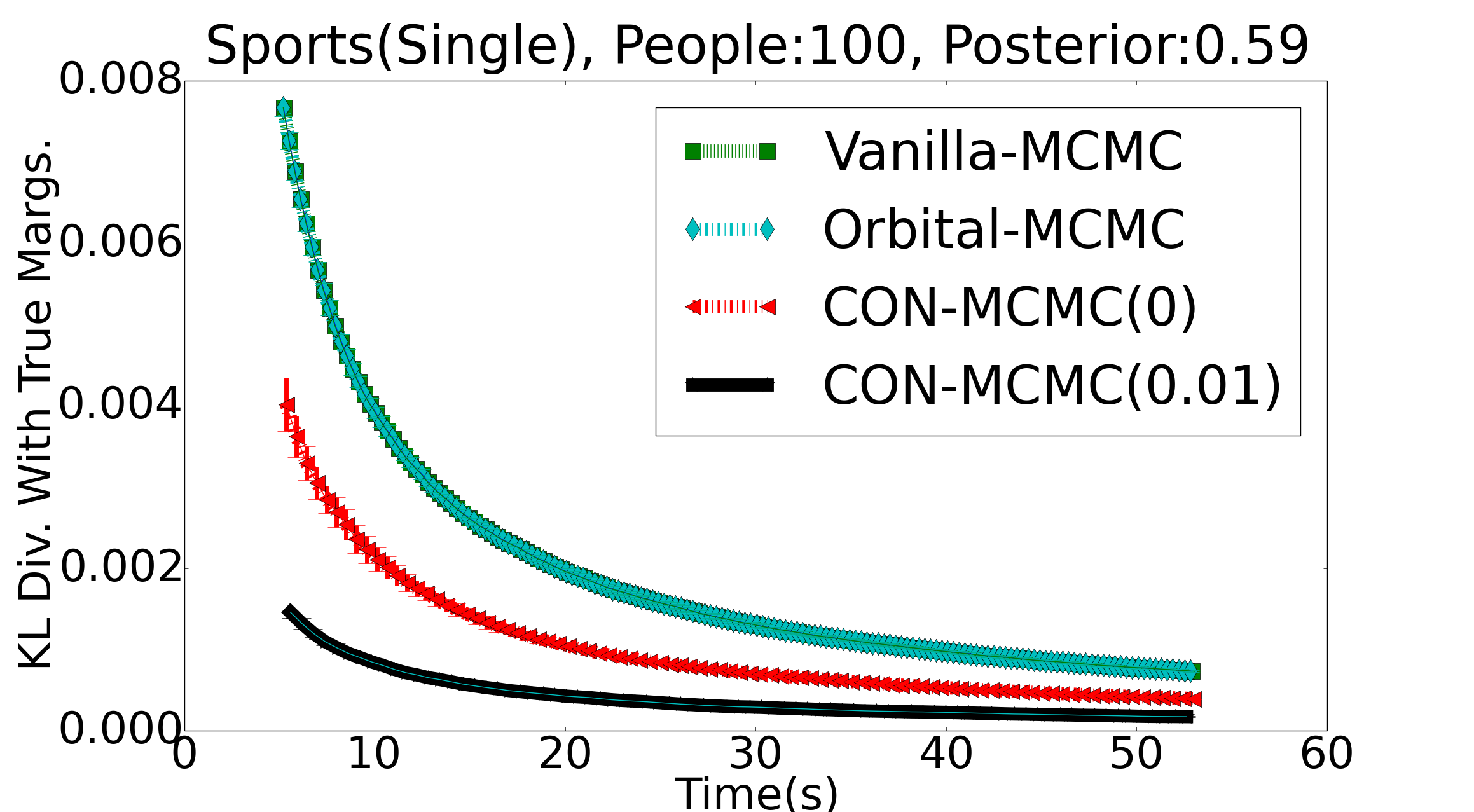}}
{\includegraphics[width=0.23\textwidth]{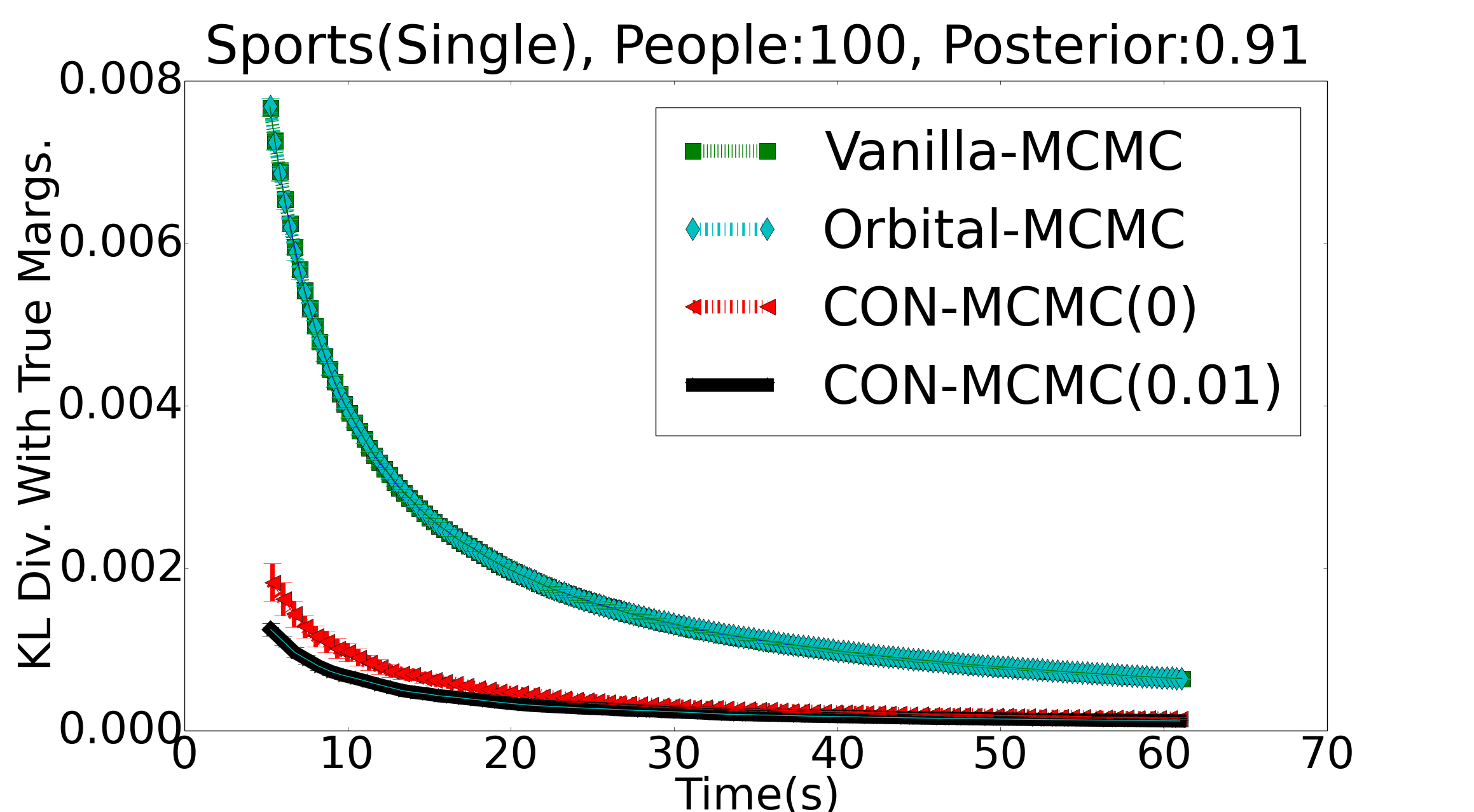}}}}

\caption{ \conmcmc\ effectiveness increases in Single Side Symmetry cases as we increase the marginal of context variable to the side having symmetry from 0.09 to 0.91. \conmcmc(0.01) provides significant gains even at very low posterior values. \conmcmc(0) performance improves with increase in the marginal.
}
\label{fig:results_posterior}
\end{figure*}

\begin{figure}
\hspace{2mm}
\includegraphics[width=0.23\textwidth]{{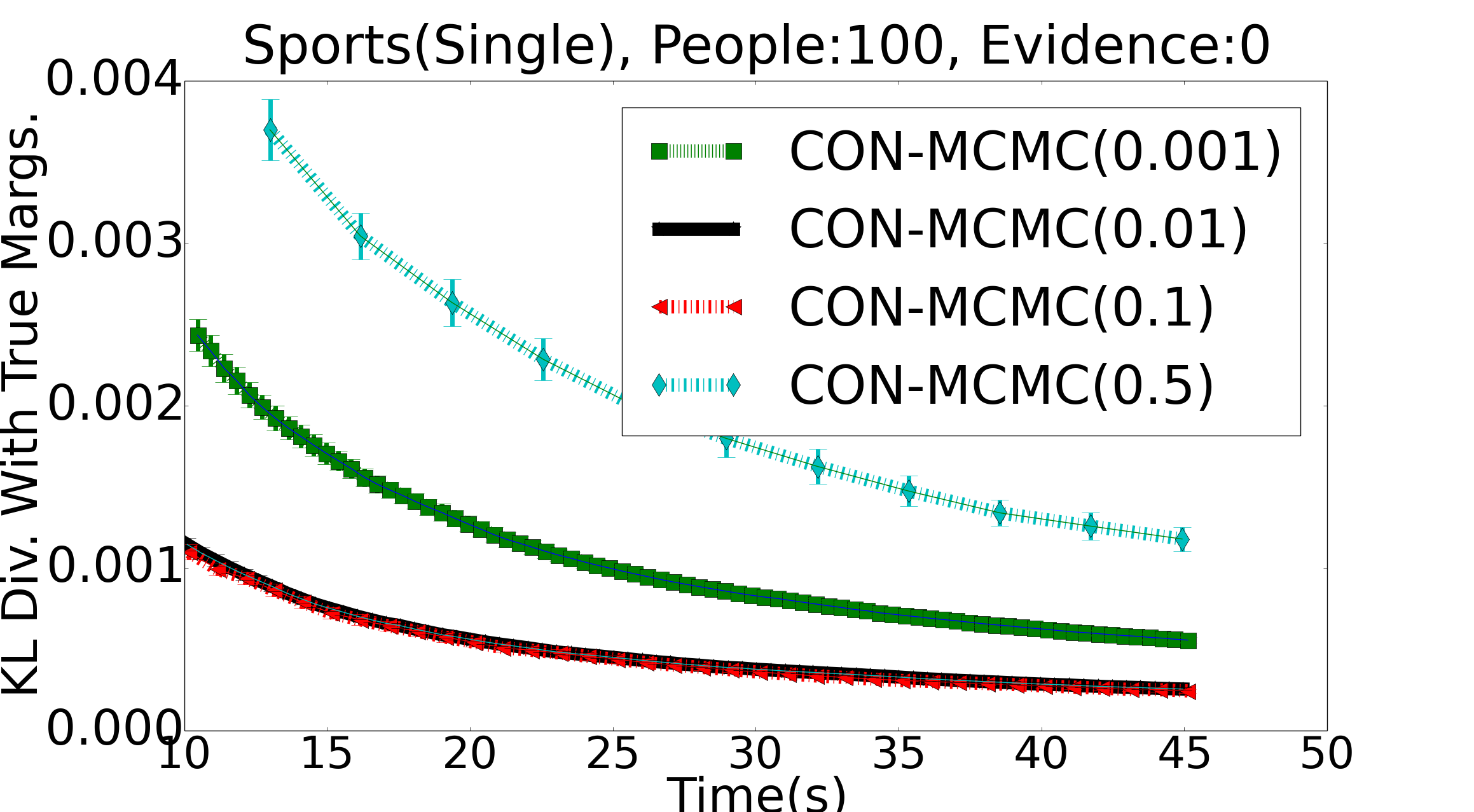}}\label{fig:alpha-sports}
\includegraphics[width=0.23\textwidth]{{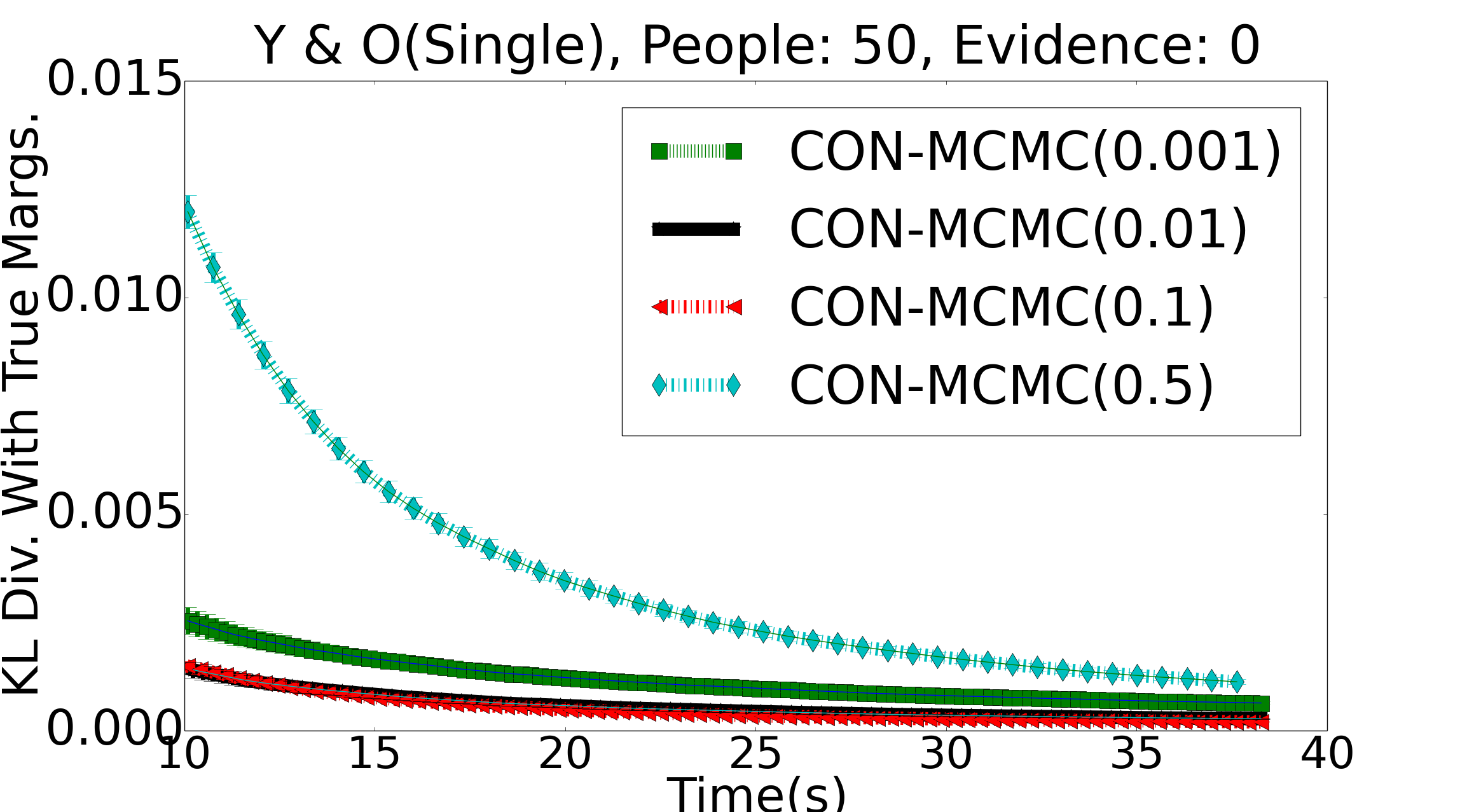}}
\label{fig:alpha-fs}
\vspace{-2ex}
\caption{$\alpha$=0.01 and $\alpha$=0.1 work best across both domains. Very high as well as very low values of $\alpha$ lead to poor performance.}\label{fig:alpha}
\end{figure}

\vspace{1ex}
\noindent{\bf Variation across Versions of a Domain: }
Figure~\ref{fig:results}(b) and \ref{fig:results}(d) show the plots for the {\it Single} side versions of Sports network and $Y\&O$, respectively. We observe a significant difference in the performance of the two \conmcmc\ algorithms. The reason is subtle. Since symmetries exist only on one side, that side mixes quickly for \conmcmc(0); however, the other side does not mix as well, because of lack of symmetries. \conmcmc($\alpha$) for $\alpha>0$ mitigates this by upsampling the flip of the context variable. This enables the rapid mixing on symmetry side to regularly influence the non-symmetry side (via Gibbs move), which leads to a faster mixing on that side too. Nevertheless, \conmcmc(0) is still able to outperform both Gibbs sampling as well as Orbital MCMC by exploiting the single sided symmetry. The posterior of symmetry side is 0.33 in Sports network (Figure~\ref{fig:results}(b)) and 0.37 in $Y\&O$ (Figure~\ref{fig:results}(d)).

We also observe in the first graph of Figure \ref{fig:results}(c), that \conmcmc(0) performs somewhat worse than \conmcmc(0.01). We believe that the reason for performance in this two-sided symmetry domain is similar to the single-sided case. In $Y\&O$, when $IsYoung=$true, substantial symmetries may exist due to smoking, cancer and friends variables. However, on the other side, the symmetries are far less (only for eating out variables). This implies that \conmcmc(0) will have much faster mixing on one side, but not on the other. On the other hand, \conmcmc($0.01$) will upsample context variable flips and allow the stronger symmetry side to influence the other. In general, \conmcmc($\alpha$) performance is highly robust to varieties of symmetric and asymmetric domains.

\vspace{1ex}
\noindent{\textbf{Variation with Posterior of Context Variable:}} 
We investigate performance on Single-sided domains further by varying the posterior marginal probability of the context variable. Figure \ref{fig:results_posterior} shows the results for Sports network (Single) with marginal probability of $Sport=$ tennis varying from $0.09$ to $0.91$. Note that $Sport=$ tennis side is the side where symmetries exist. 

The graphs show an interesting trend. Even for very low marginals, \conmcmc(0.01) is able to benefit from one sided symmetries. Since the marginal is low we expect any MCMC algorithm to spend most of its time on the non-symmetry side. However, \conmcmc(0.01) will still go back and forth several times between two sides; each flip to symmetry side and back will help in potentially reaching a different region of the state space leading to better mixing on the non-symmetry side.

Not surprisingly, \conmcmc(0) does not perform as well for low marginals -- it does not get to switch contexts as often, and ends up mixing slowly on the important, non symmetry side. As marginal of the context variable increases, the relative performance of \conmcmc(0) improves substantially. As marginal becomes high ($0.91$), both \conmcmc\ samplers end up sampling mostly on the symmetry side, and can reap benefits of symmetries similarly. We also conduct these experiments for the $Y\&O$ domain and  observe a very similar behavior.

\vspace{1ex}
\noindent
\textbf{Variation with $\alpha$ Parameter:} Figure \ref{fig:alpha} shows the performance of \conmcmc($\alpha$) for different values of $\alpha$ in the range 0.001 to 0.5 for both Sports network (single) and $Y\&O$ (single) domains. Our algorithm is fairly robust for values of $\alpha$ between $0.01$ and $0.1$. Its performance starts to degrade for very low as well as very high values of $\alpha$. For very low values of $\alpha$, algorithm's behavior approaches that of \conmcmc(0). For very high values of $\alpha$, the algorithm spends too much time flipping the context variable and not enough time exploring the state space, resulting in poor performance.

Overall, we conclude that \conmcmc(0.01) is robust to various experimental settings and obtains the best results significantly outperforming Orbital MCMC and Gibbs. This underscores the importance of our contextual symmetry framework for probabilistic inference. 

\section{Discussion and Future Work}
While our work extends the capability of lifted inference to a wider range of settings, it also raises important questions. In many cases, the set $V$ of context variables is known from domain knowledge or domain description especially in relational models. An open question is how to automatically compute a good set $V$, since trying all possible sets can be prohibitive. We have designed a heuristic approach that greedily chooses the most useful context variable every iteration and adds it to the context set. It uses a few initial rounds of the color passing algorithm~\cite{kersting&al09} to approximate the amount of additional symmetry obtained by making a variable part of the context. More experiments are needed to assess the effectiveness of our approach.

Another important observation is that the set of contextual symmetries may not monotonically increase with increasing context size. This may happen if additional context variables break existing symmetries, since context variables are forced to undergo identity mapping. Then, how do we design algorithms so that their effectiveness monotonically increases with larger contexts in all cases? This is an important direction for future work.

Another question concerns the robustness of performance of symmetry-based inference algorithms. Over the course of our experiments, we tested our algorithms on several domain variations. While in most cases \conmcmc(0.01) and \conmcmc(0) performed much better than Gibbs, in rare cases, the performance was worse too. Further investigations revealed two main sources of lower performance. 

The first and more prominent cause is the trade-off between mixing speed and sampling time. Because all symmetry-based algorithms run an expensive product replacement algorithm \cite{pak00} to sample from an orbit, next samples for \conmcmc\ (and Orbital MCMC) are generated much slower than Gibbs. In domains where symmetries are prevalent, this slower sampling is mitigated by rapid mixing, but in other domains, it could result in a worse performance. An intelligent wrapper that guesses whether to exploit symmetries or not in a given domain will be crucial for developing a robust inference algorithm. The second reason for lower performance is subtle. \conmcmc($\alpha$) is able to exploit contextual symmetries (even single-sided)  in a wide variety of settings, but in one situation it can lose to other algorithms. This happens when the context variable has a huge Markov blanket, so much so that one Gibbs move that flips the context variable becomes overbearingly costly. Since \conmcmc($\alpha$) upsamples flips of context variables, this can cause significant loss to overall performance, even though the mixing is much faster with respect to number of samples.

Another observation relates to the effect of evidence in a domain. Evidence can both help and hurt symmetries in an inference problem. In some cases, evidence can break existing symmetries and reduce the relative gain of symmetry-based algorithms. In other cases, evidence can break edges and create new symmetries and help them. While in our experiments, we didn't find \conmcmc(0.01) to be ever worse than Gibbs due to additional evidence, such pathological cases can be constructed.

It would be interesting to see how algorithms other than MCMC can benefit from our contextual symmetry framework. In the future, we would also like to explore approximate contextual symmetries that could make our contribution applicable to several other domains, where exact contextual symmetries cannot be found. We would also like to theoretically analyze the mixing time of \conmcmc.

\section{Related Work}
\label{sec:related}

Some papers have discussed methods for computing symmetries under a given evidence \cite{broeck&darwiche13,venugopal&gogate14,kopp&al15}. As discussed in Section \ref{subsec:compute}, the algorithm for computing contextual symmetries is closely related to computing evidence-based symmetries. The main difference is in the way we use these symmetries for downstream inference.

While our general notion of contextual symmetries is novel it has connections to a few recent works. The RockIt system \cite{noessner&al13} identifies contextual symmetries in a very special case in which the domain theory has a set of disjunctive clauses of a specific kind $g_i \vee c$ where each $g_i$ is a single literal (or its negation). For this setting, $c$ is a natural context and symmetries among $g_i$s can be exploited. RockIt does not provide any general notion beyond this special case. It constructs a reduced ILP for MAP inference instead of marginal inference, as in our case.

There is recent work on exploring connections between the concept of exchangeability of random variables and tractability of probabilistic inference \cite{niepert&broeck14}. Our contextual symmetries can be seen as a generalization of their conditional decomposability to conditional {\em partial} decomposability where the sufficient statistics are precisely the contextual orbits. Whereas Niepert and Van den Broeck \shortcite{niepert&broeck14} primarily focus on developing the theory for conditional decomposability, we propose and additionally connect this with the symmetries present in the structure of a graphical model. Further, unlike them we develop an algorithm to compute these conditional decompositions (contextual symmetries in our case) and show how they can be used in practice for efficient probabilistic inference.

As discussed in Section \ref{sec:intro}, our work builds upon the recent literature on lifted inference that pre-computes explicit domain symmetries using automorphism groups \cite{niepert12,bui&al13,broeck&niepert15} and exploits them for efficient inference. Our work is most closely related to Orbital MCMC \cite{niepert12}. Our experimental results shows the value of \conmcmc\ over Orbital MCMC that does not incorporate contextual symmetries.

Our contextual symmetries are also analogous to \emph{conditional symmetries} in constraint satisfaction problems (CSPs) \cite{gent2005-cp,walsh2006-cp,gent2007-cp}. CSP symmetries are called conditional if symmetry groups exist only in a sub-problem of the original CSP, i.e., in a CSP with one or more additional constraints. The CSP problem setting and their actual manifestation in algorithms are quite different from lifted inference, but their definition and use of conditional symmetries is in the same spirit as ours.

\section{Conclusions}
We present a novel framework for contextual symmetries in probabilistic graphical models. Contextual symmetries generalize and extend previous notions of orbital symmetry. Given any context, we can efficiently compute these symmetries by reducing it to the problem of colored graph isomorphism. While our framework is independent of any inference algorithm, we illustrate its applicability by proposing \conmcmc, an MCMC approach that exploits contextual symmetries. Our experiments on several domains validate the efficacy of \conmcmc, where it outperforms existing state-of-the-art techniques for symmetry-based MCMC by wide margins. Finally, we have released a reference implementation of \conmcmc\ for wider use by the research community.

\section*{Acknowledgements}
We are grateful to Mathias Niepert for sharing the code implementation of Orbital MCMC and for answering our queries on the code. We would also like to thank the anonymous reviewers for their comments and feedback. We thank Ritesh Noothigattu for discussions and comments on this research. 
Ankit Anand is supported by  TCS Research  Scholars Program.  Mausam and Parag Singla are supported by Visvesvaraya faculty research awards by Govt. of India. Mausam is also supported by Google and Bloomberg research awards.

\bibliographystyle{named}
\bibliography{all}

\end{document}